\newtheorem{theorem}{Theorem}
\newtheorem{definition}{Definition}
\newtheorem{lemma}{Lemma}
\newtheorem{corollary}{Corollary}
\DeclareMathOperator*{\argmin}{arg\,min}
\newcommand{\ie}{i.e.}
\newcommand{\ourmethod}{MINO}
\definecolor{iccvblue}{rgb}{0.21,0.49,0.74}
\title{Is Meta-Learning Out? Rethinking Unsupervised Few-Shot Classification with Limited Entropy}
\author{
Yunchuan Guan$^{1,2}$,
Yu Liu$^{1}$\thanks{Co-corresponding authors. liu\_yu@hust.edu.cn, zhke@hust.edu.cn, zqshen@ntu.edu.sg,lenny.lilei.cs@gmail.com}, 
Ke Zhou$^{1}$\footnotemark[1],
Zhiqi Shen$^{2}$\footnotemark[1],
Jenq-Neng Hwang$^{3}$,
Serge Belongie$^{4}$,
Lei Li$^{3,4}$\footnotemark[1]\\
$^{1}$Huazhong University of Science and Technology \quad 
$^{2}$Nanyang Technological University\\
$^{3}$University of Washington \quad 
$^{4}$University of Copenhagen
}
\begin{document}
\maketitle

%%%%%%%%% ABSTRACT
\begin{abstract}
Meta-learning is a powerful paradigm for tackling few-shot tasks. However, recent studies indicate that models trained with the whole-class training strategy can achieve comparable performance to those trained with meta-learning in few-shot classification tasks. To demonstrate the value of meta-learning, we establish an entropy-limited supervised setting for fair comparisons. Through both theoretical analysis and experimental validation, we establish that meta-learning has a tighter generalization bound compared to whole-class training. We unravel that meta-learning is more efficient with limited entropy and is more robust to label noise and heterogeneous tasks, making it well-suited for unsupervised tasks. Based on these insights, We propose MINO, a meta-learning framework designed to enhance unsupervised performance. MINO utilizes the adaptive clustering algorithm DBSCAN with a dynamic head for unsupervised task construction and a stability-based meta-scaler for robustness against label noise. Extensive experiments confirm its effectiveness in multiple unsupervised few-shot and zero-shot tasks. 

\end{abstract}

%%%%%%%%% BODY TEXT
\section{Introduction}\label{sec:intro}
Meta-learning has emerged as a powerful paradigm for learning to adapt to unseen tasks~\cite{survey}, demonstrating better generalization ability in few-shot learning and reinforcement learning tasks~\cite{meta_reinforcement}. However, recent studies empirically demonstrate that a well trained embedding exhibits comparable or even better accuracy on several few-shot classification tasks~\cite{revisiting_few_shot, a_good_embedding,meta_baseline,li2024cpseg,few_shot_hm,few_shot_survey,li2025addressing}. These methods employ a vanilla strategy, \ie, whole-class training (WCT), where each sample is trained within a single-task learning framework. The empirical results suggest that the sophisticated bi-level optimization and the task organization paradigm of meta-learning are somewhat redundant.

Previous studies attempt to derive a better generalization error bound~\cite{generalization_PAC_3,genralization_information_1,generalization_stable_1} or convergence rate~\cite{on_converage,multi_converage} to improve meta-learning itself. However, little is known about why its theoretical superiority conflicts with the aforementioned experimental results. Meanwhile, a theoretical framework for comparing the meta-learning and the whole-class training has yet to be explored. Figure~\ref{fig:two_secnario} shows the unfair comparison between meta-learning and WCT. It suggests that although the same dataset is used, WCT requires distinguishing more categories than meta-training (MT) and consumes more annotation resources. Since the annotation process can be considered as an entropy-reduced process, we introduce the \textbf{entropy-limited supervised setting}.

Lemma~\ref{lemma:correct_num} suggests that the entropy-limited supervised setting represents an intermediate state. Depending on the available entropy, it can degrade into either the conventional supervised or unsupervised setting. Under this setting, we focus on meta-learning algorithms based on bi-level optimization and discuss three key issues: (1) establishing a theoretical framework for comparing meta-learning and WCT; (2) explaining why and when meta-learning outperforms WCT; (3) providing insights that enhance the performance of meta-learning.

\begin{figure}[t]
   \centering  %图片全局居中
    \includegraphics[width=0.85\linewidth]{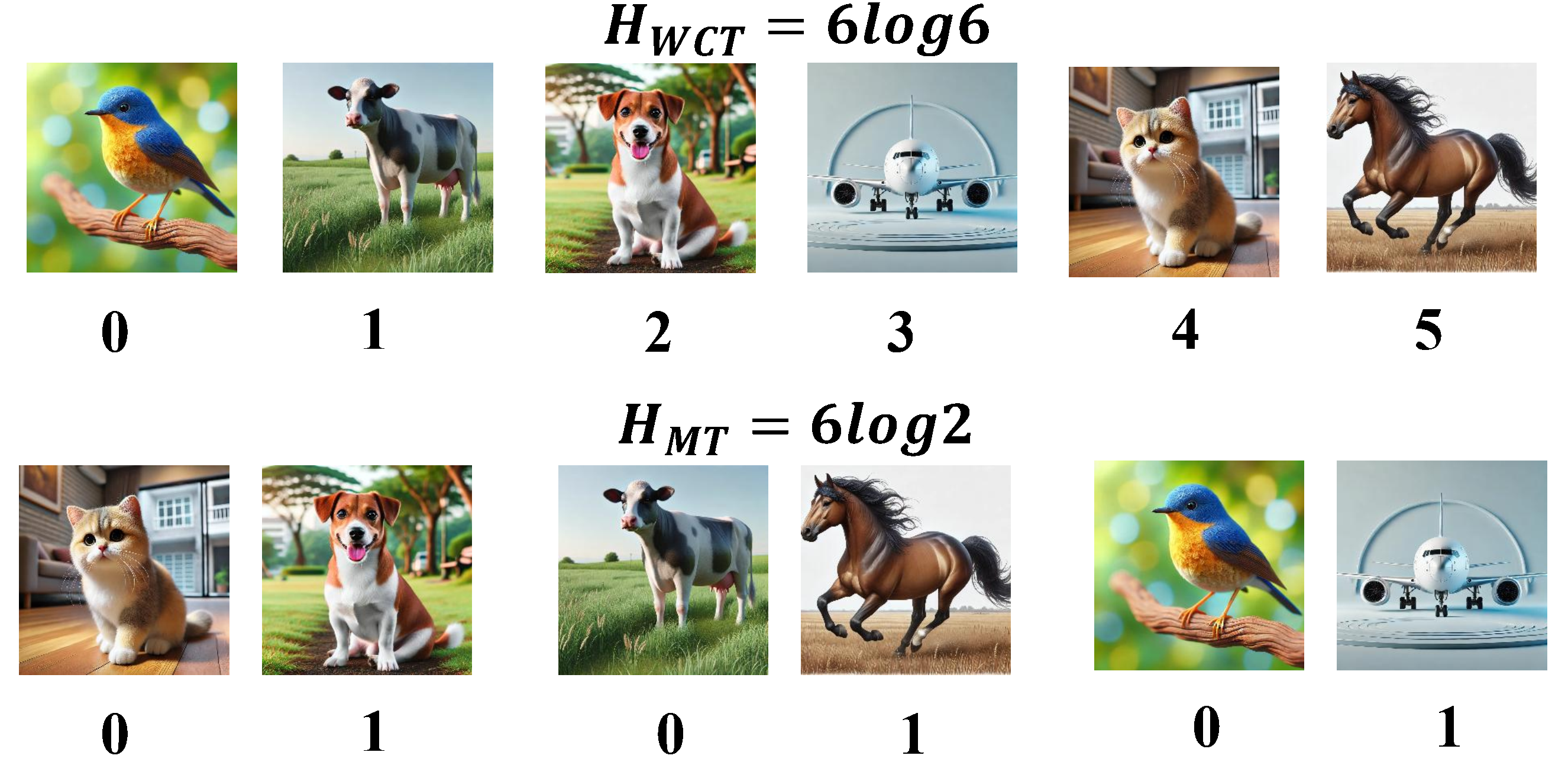}
\caption{Unfair comparison under the conventional supervised setting. The annotation cost varies across different training methods. $H$ represents the information entropy.}
\label{fig:two_secnario}
\vspace{-8pt}
\end{figure}

In this paper, (1) we establish a comparison framework based on the entropy-limited supervised setting and the uniform stability theory~\cite{stability_and_generalization,generalization_stable_2}. We prove that, for few-shot classification tasks, meta-learning has a tighter generalization error bound compared to WCT. Our experiments further support this conclusion. (2) We unravel two insights: meta-learning demonstrates more efficient utilization of entropy and exhibits greater robustness to label noise and heterogeneous tasks. These insights indicate that meta-learning is more suitable for unsupervised few-shot tasks. (3) We propose \ourmethod, \ie, \textbf{M}eta-learning \textbf{I}s \textbf{N}ot \textbf{O}ut, which enhances meta-learning in unsupervised few-shot and zero-shot tasks. The overview of \ourmethod~is shown in Figure~\ref{fig:overview}. The key ideas are to leverage the adaptive clustering algorithm DBSCAN~\cite{DBSCAN} for heterogeneous task learning and to utilize the dynamic head with an adaptive meta-scaler for robustness against label noise.

Our contribution can be summarized as follows:
\begin{itemize}
    \item We propose an entropy-limited supervised setting for fair comparison and theoretically prove that meta-learning has a tighter upper bound on the generalization error compared to WCT.
    \item We unravel two insights, \ie, meta-learning efficiently utilizes entropy and demonstrates greater robustness to label noise and heterogeneous tasks.
    \item We propose \ourmethod~for unsupervised zero-shot and few-shot classification tasks. It integrates DBSCAN for heterogeneous task learning and employs the dynamic head with an adaptive meta-scaler to handle label noise.
\end{itemize}

\section{Entropy-limited supervision}\label{sec:entropy_limited_supervision}
As shown in Figure~\ref{fig:two_secnario}, the conventional supervised setting necessitates that WCT accommodates more classes in comparison to meta-learning. Despite utilizing the same dataset, the WCT-based single-task learning method demands more annotation resources. Consequently, the classic supervised frameworks~\cite{a_good_embedding, meta_baseline, a_closer_again} do not establish an equitable basis for comparison. To address this problem, we introduce the entropy-limited supervised setting because the annotation process can be considered as an entropy-reduced process.
\begin{lemma}\label{lemma:correct_num}
Let the sample volume of the dataset be \(m\), the number of classes be \(C\), the sample number per class be balanced, and the entropy consumed by annotation be \(H\). Then, the expectation of correct labeled samples, i.e., $m'$, is given by
\begin{align}
 m' = \frac{m}{C}e^{\frac{H}{m}}~~~~~~~~\textit{s.t.}~~H \in [0 , mlogC].
\end{align}
\end{lemma}
The proof is detailed in Supplementary Material A. When $H \rightarrow mlogC$, $m' \rightarrow m$ and the entropy-limited setting degrades into the conventional supervised setting. When $H \rightarrow 0$, $m' \rightarrow \frac{m}{C}$. It means that samples are randomly labeled. This condition is equivalent to the unsupervised setting with unlabeled samples. Otherwise, the condition is equivalent to the supervised setting with label noise or the unsupervised setting with a pseudo-label generator.

\subsection{Comparison of Generalization Error}\label{sec:compare generalization error}
Based on the uniform stability theory, ~\citet{generalization_stable_2} derives a generalization error upper bound for WCT trained by single task learning and meta-learning. By introducing Lemma~\ref{lemma:correct_num}, we derive the generalization error bound within the entropy-limited setting.
\begin{theorem}[Generalization error of entropy-limited WCT]
Let the sample volume of the dataset be \( m \), the number of classes be $C_1$, the annotation entropy be $H$, and the single-task learning algorithm $\bm{A}$ have uniform stability $\beta$~\cite{generalization_stable_2}. Then the generalization error $R_{gen}(\bm{A})$ is bounded by the following equation with probability at least \( 1 - \delta \) for any \( \delta \in (0,1) \),
\begin{equation}
R_{gen}(\bm{A}) \leq 2\beta + (4m\beta+M)\sqrt{\frac{C_1\ln(1/\delta)}{2me^{H/m}}}.
\end{equation}
\end{theorem}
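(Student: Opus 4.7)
The plan is to specialize the Bousquet--Elisseeff-style uniform-stability generalization inequality for single-task learning to the entropy-limited regime by substituting the effective clean-sample size identified in Lemma~\ref{lemma:correct_num}.

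First I would state the baseline bound that the authors cite: for a single-task learner with uniform stability $\beta$ trained on $n$ i.i.d.\ samples drawn from the true distribution, with loss bounded by $M$, McDiarmid's inequality applied to the generalization gap (whose per-coordinate bounded difference is of order $\beta + M/n$) yields, with probability at least $1-\delta$,
$$R_{gen}(\bm{A}) \le 2\beta + (4n\beta + M)\sqrt{\frac{\ln(1/\delta)}{2n}}.$$
This is the standard starting point and does not need to be re-derived here.

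Next I would translate the entropy-limited setting into this template. Under Lemma~\ref{lemma:correct_num}, only $m' = \tfrac{m}{C_1}e^{H/m}$ samples in expectation carry their true labels; the remaining $m-m'$ samples receive labels drawn uniformly at random over $C_1$ classes and are thus independent of the underlying input distribution. Consequently, although the algorithm still ingests all $m$ samples (so the sensitivity coefficient $4m\beta$ inherited from the stability argument is unchanged), the empirical risk concentrates around the population risk only through the clean coordinates, and the effective denominator inside McDiarmid is $m'$ rather than $m$. Substituting $n \mapsto m'$ in the square-root term and plugging in the closed form of $m'$ gives $1/(2m') = C_1/(2m e^{H/m})$, which yields exactly the claimed bound.

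The main obstacle is justifying this asymmetric substitution rigorously, namely keeping $m$ inside $4m\beta$ while replacing it by $m'$ inside the square root. One clean way is to condition on the random partition produced by the annotation process into a clean subset $S_c$ and a noisy subset $S_n$, apply McDiarmid only over the coordinates in $S_c$ (whose joint law matches the i.i.d.\ clean setting), and bound the contribution of $S_n$ through the algorithm's stability constant, which is distribution-agnostic and therefore insensitive to whether the perturbed coordinate is clean or noisy. A secondary technicality is that $|S_c|$ is itself random with expectation $m'$; a Chernoff tail combined with a union bound with budget $\delta/2$ allows one to replace $|S_c|$ by $m'$ while only affecting constants, which are absorbed into the statement. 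The remainder of the argument is routine algebra.
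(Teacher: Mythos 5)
Your proposal follows essentially the same route as the paper: start from the cited uniform-stability bound $R_{gen}(\bm{A}) \le 2\beta + (4m\beta + M)\sqrt{\ln(1/\delta)/(2m)}$, invoke Lemma~1 to identify the effective clean-sample count $m' = \tfrac{m}{C_1}e^{H/m}$, and substitute $m \mapsto m'$ only inside the square root (the paper likewise leaves $4m\beta$ untouched, justifying this by noting that term is asymptotically killed by $\beta$ in the corollary). Your additional machinery for rigorously justifying the asymmetric substitution (conditioning on the clean subset, Chernoff plus union bound for the random $|S_c|$) goes beyond what the paper does --- the paper simply asserts a worst-case restriction to correctly labeled samples --- but the core argument is the same.
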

\begin{theorem}[Generalization error of entropy-limited meta-learning]\label{theorem:meta_gen_limited_entropy}
Let the sample volume of the dataset be \( m \), the number of classes per task be $C_2$, the number of samples per class be $k$, the number of tasks be $n$, the annotation entropy be $H$, the base-learner $\bm{A}$ has uniform stability $\beta$, and the meta-learner $\bm{\mathcal{A}}$ have uniform stability $\tilde{\beta}$~\cite{generalization_stable_2}. Then generalization error $R_{gen}(\bm{\mathcal{A}})$ is bounded by the following equation with probability at least \( 1 - \delta \) for any \( \delta \in (0,1) \),
\begin{equation}\label{eq:meta_generalization_bound}
R_{gen}(\bm{\mathcal{A}}) \leq 2\beta + 2\tilde{\beta}+(4n\tilde{\beta}+M)\sqrt{\frac{kC^2_2\ln(1/\delta)}{2me^{H/m}}}.
\end{equation}
\end{theorem}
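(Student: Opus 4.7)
The plan is to mirror the proof strategy of Theorem~1 but at the meta level, then specialize to the entropy-limited regime via Lemma~\ref{lemma:correct_num}. As a starting point I would take the fully supervised two-level uniform stability bound for meta-learning from~\cite{generalization_stable_2}, namely a bound of the form $R_{gen}(\bm{\mathcal{A}}) \leq 2\beta + 2\tilde{\beta} + (4n\tilde{\beta}+M)\sqrt{\ln(1/\delta)/(2n)}$. The two additive stability terms record, respectively, the sensitivity of the base-learner to swapping one sample inside a task and the sensitivity of the meta-learner to swapping an entire task in the meta-training collection, while the concentration term comes from applying McDiarmid's inequality to the meta-risk viewed as a function of the $n$ i.i.d.\ meta-training tasks.

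The next step is to replace the supervised task count $n$ inside the square root by its entropy-limited analogue. The key bookkeeping is that within a meta-episode a label is only assigned relative to the $C_2$ candidate classes of the task the sample is drawn into, so each labeling operation is a $C_2$-way decision rather than a $C_1$-way decision (this is exactly the asymmetry highlighted in Figure~\ref{fig:two_secnario}). Applying Lemma~\ref{lemma:correct_num} with class count $C_2$ therefore yields an expected number of correctly labeled samples $m' = (m/C_2)e^{H/m}$, and since each admissible task consumes $kC_2$ clean samples the effective number of clean tasks that the meta-learner can exploit is
\begin{equation*}
n' \;=\; \frac{m'}{kC_2} \;=\; \frac{m\,e^{H/m}}{kC_2^{\,2}}.
\end{equation*}
I would then substitute $n\mapsto n'$ inside the square root, while keeping the $4n\tilde{\beta}$ prefactor tied to the nominal number of tasks (exactly as Theorem~1 keeps the $4m\beta$ prefactor tied to the nominal sample count $m$). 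This transforms the concentration term into $\sqrt{kC_2^{\,2}\ln(1/\delta)/(2m\,e^{H/m})}$, and assembling the pieces reproduces~\eqref{eq:meta_generalization_bound}.

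The main obstacle I anticipate is the justification of the class-count choice $C=C_2$ when invoking Lemma~\ref{lemma:correct_num}: one must argue, in a way compatible with the information-theoretic setup of Section~\ref{sec:entropy_limited_supervision}, that an annotation performed inside a meta-task only needs to disambiguate among the $C_2$ in-task classes, so the entropy spent per label is $\log C_2$ rather than $\log C_1$. This is precisely what causes the effective task count to scale as $1/C_2^{\,2}$ rather than as $1/C_1$, and is the source of meta-learning's entropy advantage over WCT. A secondary subtlety is the task-accounting step $n' = m'/(kC_2)$, which requires either an i.i.d.\ clean-sample argument (a task counts as clean only when all its $kC_2$ samples are correctly labeled) or a linearity-of-expectation argument combined with a disjointness assumption on tasks, so that the clean-sample budget $m'$ partitions cleanly into meta-training episodes of size $kC_2$ that can validly enter the McDiarmid concentration step.
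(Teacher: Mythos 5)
Your proposal follows essentially the same route as the paper's own proof: start from the two-level uniform-stability bound of~\cite{generalization_stable_2}, take the worst-case task count $n = m/(kC_2)$, substitute $m \mapsto m' = (m/C_2)e^{H/m}$ via Lemma~\ref{lemma:correct_num} inside the square root only, and leave the $4n\tilde{\beta}$ prefactor untouched. The one obstacle you flag --- justifying the class count $C = C_2$ rather than $C_1$ when invoking Lemma~\ref{lemma:correct_num} --- is left implicit in the paper's proof as well, so your reconstruction is faithful and, if anything, more explicit about where the $kC_2^2$ factor comes from.
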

The above theorems show that the dominant terms of the generalization error for meta-learning and WCT have a similar structure and are both expressed by $C_1$, $C_2$, $K$, $m$, and $H$. This means that, through the connection established by the entropy-limited setting, meta-learning and WCT can be fairly compared within a unified theoretical framework.

\begin{corollary}\label{corollary:main}
Let the base-level stability $\beta \sim o(\sqrt{1/m})$, the meta-level stability $\tilde{\beta} \sim o(\sqrt{1/n})$, and the entropy resource $H$ be equal for each algorithm. Then, the meta-learning algorithm $\bm{\mathcal{A}}$ has a tighter generalization error upper bound than the single-task learning algorithm $\bm{A}$ when
\begin{equation}\label{eq:theorem_result}
    C_2^2 \cdot k < C_1.
\end{equation}
\end{corollary}

The proof of theorem and corollary are detailed in Supplementary Material B. Taking the 5-way 1-shot task on the Omniglot dataset as an example, the total number of classes is $C_1=1628$. While for meta-learning, $C_2=5$ and $k=2$. This implies that $C_2^2 \cdot k = 50$ and the condition given by the above corollary is readily met in the few-shot classification setting. The number of tasks $n$ is replaced by a much smaller\footnote{Since the samples can be shared by different tasks, $n$ is much bigger than $\frac{m}{kC_2}$} term $\frac{m}{kC_2}$. As a result, the upper bound of the meta-learning algorithm $\bm{\mathcal{A}}$ is much tighter than the one we derived in Theorem~\ref{theorem:meta_gen_limited_entropy}. Therefore, meta-learning's bound is significantly tighter than that of WCT.

\subsection{Impact of Label Noise}\label{sec:label noise}
Lemma~\ref{lemma:correct_num} derives the ratio of label noise under the entropy-limited supervised setting. It enables a fair comparison of different models under equivalent entropy conditions.
\vspace{5pt}
\par
\noindent\textbf{Efficient Usage of Limited Entropy.}
We compare the classification performance of meta-learning and WCT under label noise interference by varying the entropy available for annotation. On Omniglot and Mini-Imagenet datasets, we use the same neural network architecture and learning configuration as~\cite{MAML, ANIL}. Figure~\ref{fig:entropy_vary} shows that meta-learning algorithms MAML~\cite{MAML} obtain higher accuracy under the same entropy. These results demonstrate that the meta-learning algorithm is more efficient under entropy-limited setting.
\vspace{5pt}
\par
\noindent\textbf{Bi-Level Optimization for Noise Robustness.}
In Figure~\ref{fig:entropy_vary}, the trend of the curves shows that when $H \in (6/8mlogC,logC)$, meta-learning is less susceptible to the interference of label noise. Table~\ref{tab:noise omniglot} and Table~\ref{tab:noise mini-imagenet} show the performance of WCT, ANIL~\cite{ANIL}, and MAML~\cite{MAML} under the same noise ratio. These results demonstrate that meta-learning algorithms exhibit stronger robustness to label noise.
To clarify meta-learning’s robustness to label noise, we observe its behavior across different neural network layers. Using SVCCA~\cite{SVCCA}, we measure representation stability across epochs, tracking how the learned representations evolve during training. The learned representation of $i-th$ layer can be written as $f_{\theta^i}(D)$, where $D$ is a fixed batch of samples. At epoch $t$, the representation stability of the $i-th$ layer can be defined as:
\vspace{-5pt}
\begin{align}
    rs^i_t=SVCCA(f_{\theta^i_t}(D),f_{\theta^i_{t-1}}(D)).
\end{align}
Figure~\ref{fig:representation stability} shows the results on the Omniglot dataset with 15\% noise. Due to the limitations of single-level optimization, all layers of WCT are disrupted by label noise, showing a significant stability reduction compared to the scenario without noise. In contrast, meta-learning algorithms confine the impact of noise to the task-specific "head" (L4) while maintaining cross-task representation stability in the "body" (L0-L3), aligning with the bi-level optimization principle.

\begin{figure}[t]
    \centering  %图片全局居中
    \subfigure[Omniglot 5-way 1-shot]{\includegraphics[width=0.48\linewidth]{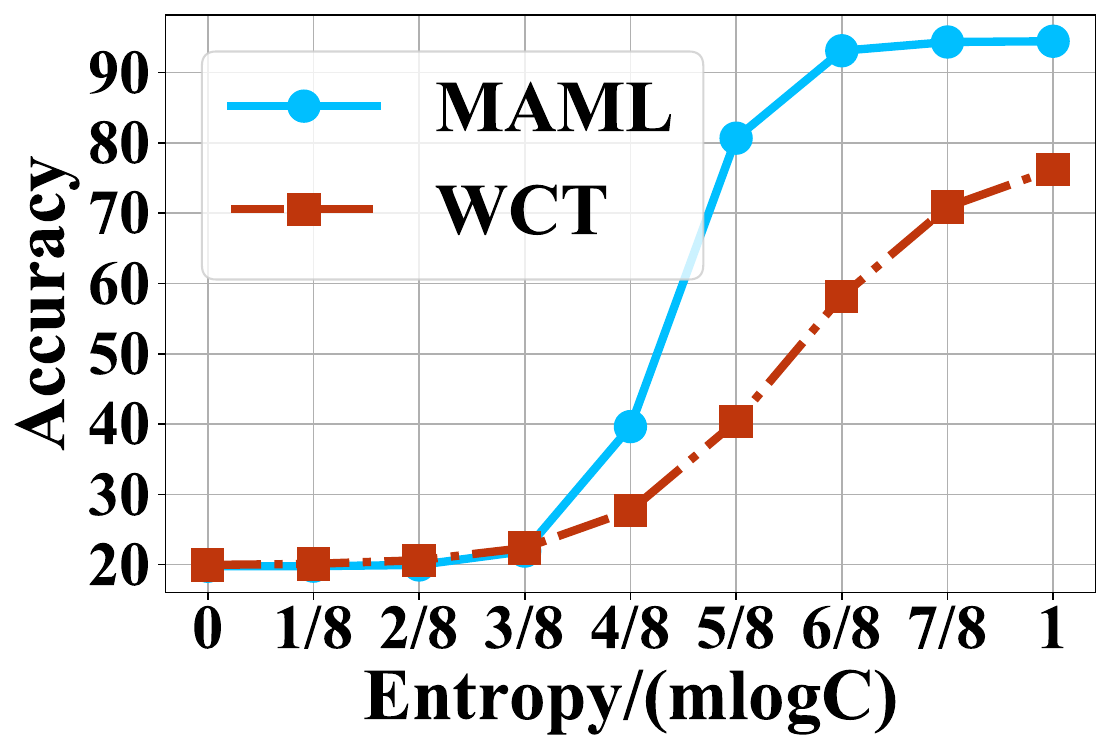}}
    \hspace{-3pt}
    \subfigure[Mini-Imagenet 5-way 1-shot]{\includegraphics[width=0.48\linewidth]{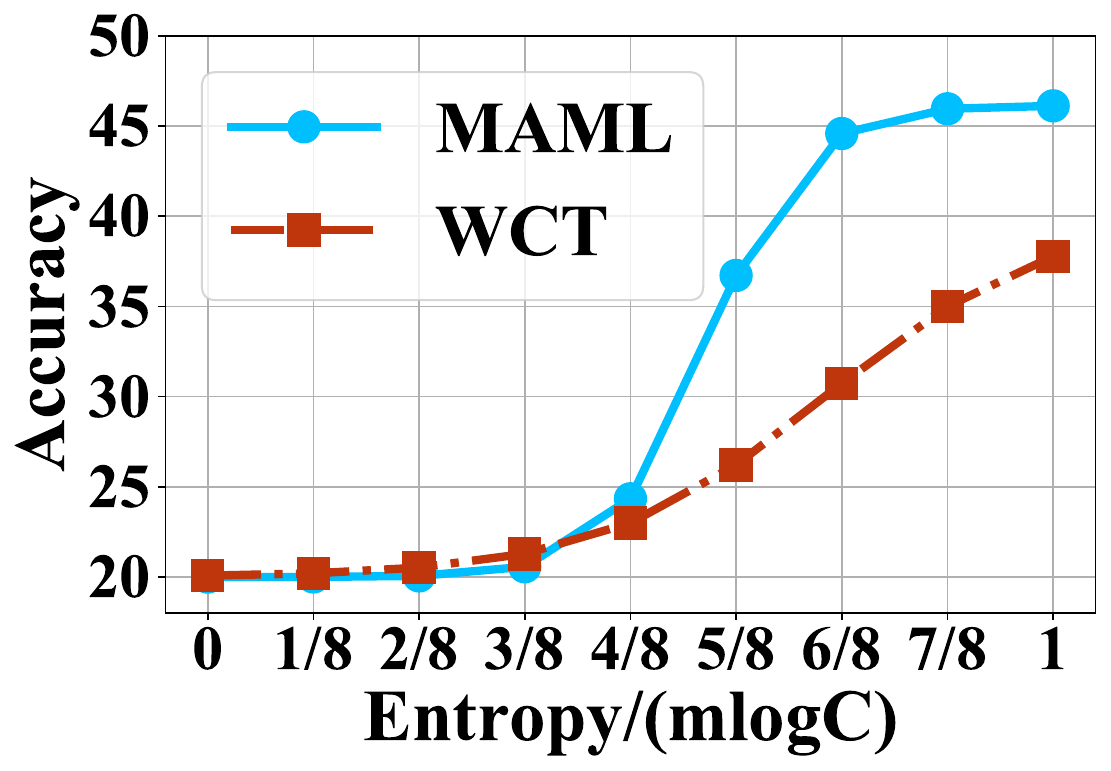}}
\caption{Comparing meta-learning and WCT by varying the entropy used for annotation.}
\label{fig:entropy_vary}
\end{figure}

\begin{table}[t]
  \centering
    \caption{Comparison of meta-learning's and WCT's resistance to label noise on the Omniglot dataset with 5-way 1-shot task. The metric is Accuracy\%.}
    \resizebox{\linewidth}{!}{
      \begin{tabular}{llccc}
        \toprule
         \textbf{Optimization Strategy}&Method& \textbf{0\% noise} & \textbf{15\% noise} & \textbf{30\% noise} \\
        \midrule
        Single-Level Optimization&WCT & 94.51 & 82.44 & 64.65 \\
        Bi-Level Optimization&ANIL & 94.35 & 91.72 & 80.59 \\
        Bi-Level Optimization&MAML & 94.46 & 91.58 & 80.72 \\
        \bottomrule
      \end{tabular}}
    \label{tab:noise omniglot}
\end{table}

\begin{table}[t]
    \centering
    \caption{Comparison of meta-learning's and WCT's resistance to label noise on the Mini-Imagenet dataset with 5-way 1-shot task. The metric is Accuracy\%.}
    \resizebox{\linewidth}{!}{
      \begin{tabular}{llccc}
        \toprule
         \textbf{Optimization Strategy} &Method& \textbf{0\% noise} & \textbf{15\% noise} & \textbf{30\% noise} \\
        \midrule
        Single-Level Optimization&WCT & 47.04 & 38.92 & 29.68 \\
        Bi-Level Optimization&ANIL & 46.77 & 41.69 & 37.45 \\
        Bi-Level Optimization&MAML & 46.81 & 41.63 & 37.51 \\
        \bottomrule
      \end{tabular}}
    \label{tab:noise mini-imagenet}
\end{table}

\begin{figure*}[t]
    \centering 
    \resizebox{0.7\linewidth}{!}{
    \includegraphics[width=\linewidth]{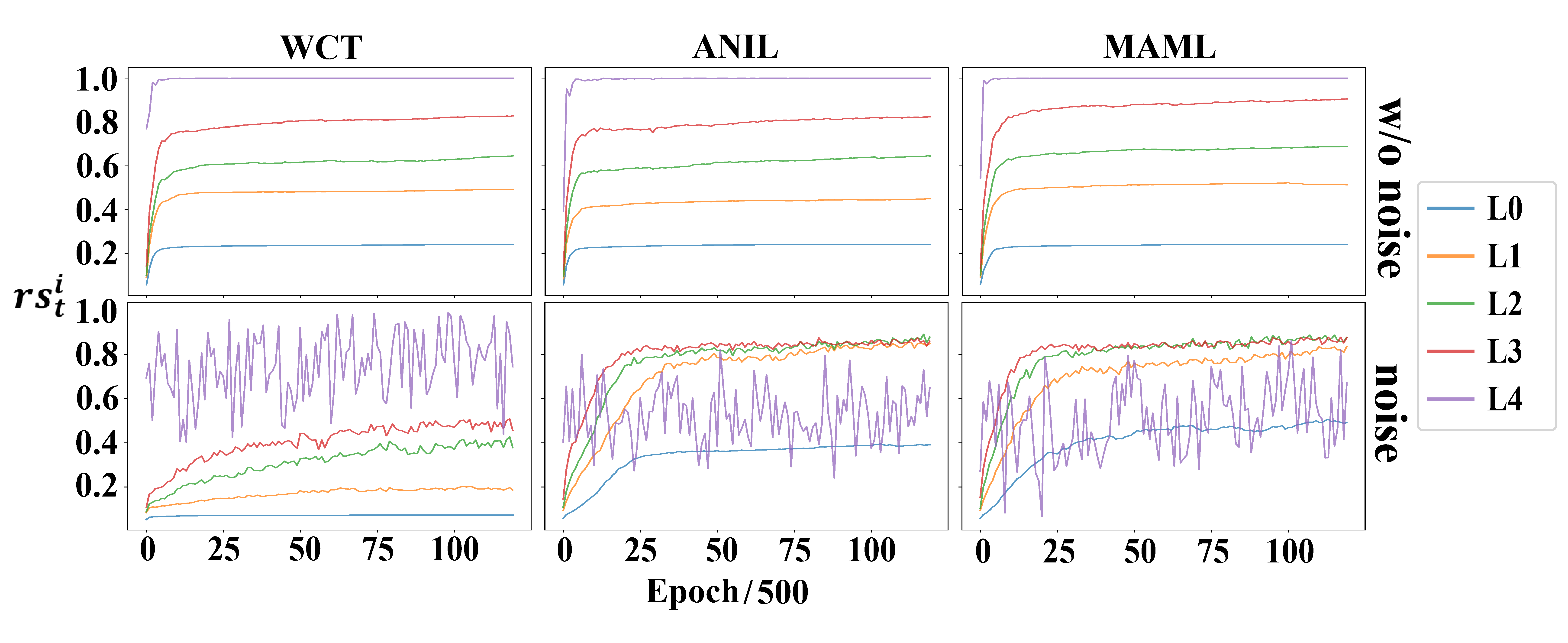}}
    \vspace{-10pt}
    \caption{Representation stability of WCT and meta-learning algorithms on the Omniglot dataset with 5-way 1-shot noisy tasks. The x-axis denotes epochs, and the y-axis denotes values of representation stability $rs^i_t$. A higher y-axis value means higher stability. L4 is the neural network's "head", and L0-L3 is the “body”.}
\label{fig:representation stability}
\end{figure*}

\begin{figure*}[t]
    \centering  %图片全局居中
    \resizebox{0.65\linewidth}{!}{
    \includegraphics[width=\linewidth]{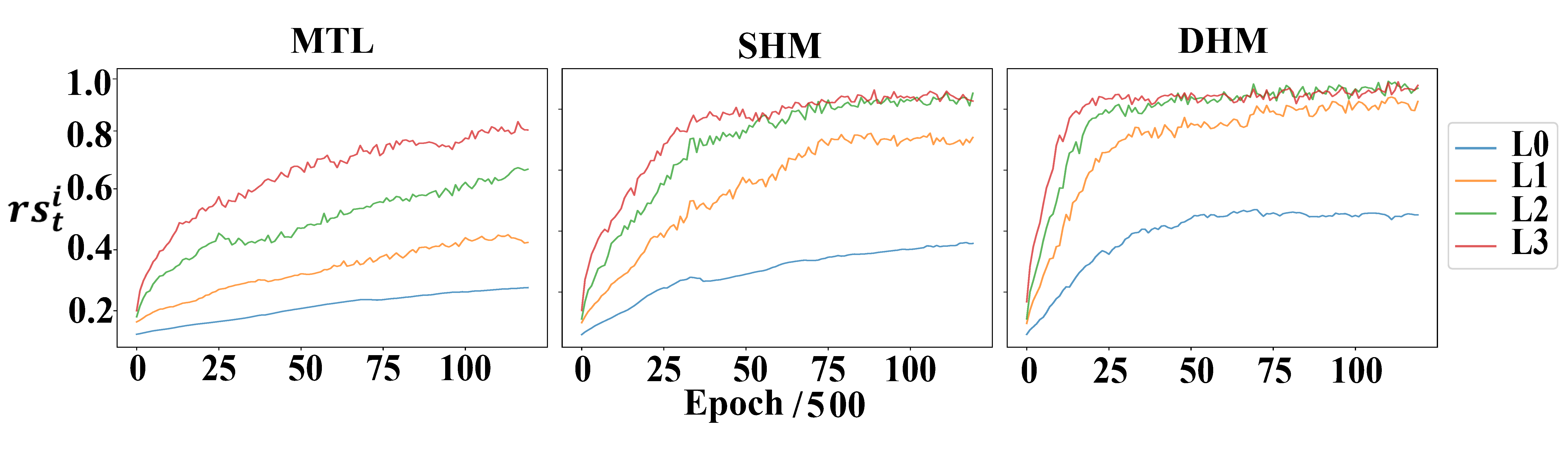}}
\vspace{-10pt}
\caption{Representation stability of DHM, SHM, and MTL trained on Omniglot's 5-20 classification way heterogeneous tasks.}
\label{fig:representation stability task type}
\end{figure*}

\subsection{Impact of Heterogeneous Tasks}\label{sec:heterogeneous tasks}
In the entropy-limited setting, samples from the same category cannot be precisely assigned to tasks. This means that the classification way is not fixed, inevitably making the tasks heterogeneous. Previous studies show that training a model with homogeneous tasks may lead to meta-overfitting\cite{Deconfounding,meta_without_memory,noise_data}, where the model performs well only on a specific classification task. Our subsequent analysis and experiments will demonstrate that a certain degree of task heterogeneity can improve the performance of meta-learning.

An intuitive way to handle heterogeneous tasks is to integrate the bi-level optimization group classification trick~\cite{any_way}, enabling dynamic-head meta-learning (DHM). We detail this approach in Section~\ref{sec:DHM}.
\vspace{5pt}
\par
\noindent\textbf{Heterogeneous Tasks Improve Performance.}
\begin{table}[t]
\centering
    \centering
    \captionsetup{justification=centering}
    \caption{Comparison of Accuracy\% on Omniglot's 5-20 way and Mini-Imagenet's 5-10 way heterogeneous Tasks.}
    \resizebox{\linewidth}{!}{
    \begin{tabular}{llcc}
    \toprule
     \textbf{Optimization Strategy}& \textbf{Method}&\textbf{Omniglot} & \textbf{Mini-Imagenet}\\
    \midrule
    Single-level Optimization&MTL & 72.95 & 35.86  \\
    Bi-level Optimization&SHM & 92.86  & 41.63  \\
    \rowcolor{gray!20}Bi-level Optimization&DHM & 93.27 & 44.09 \\
    \bottomrule
    \end{tabular}%
    }
  \label{tab:Dynamic_multi_type}%
\end{table}
We perform experiments on the Omniglot dataset with 5-20 classification ways, and the Mini-Imagenet dataset with 5-10 classification way. The setup is detailed in Supplementary Material C. Table~\ref{tab:Dynamic_multi_type} shows the comparison between dynamic-head meta-learning (DHM), classical static-head meta-learning(SHM), and classical multi-task learning (MTL). It can be found that the Bi-level-optimization-based approaches DHM and SHM perform better than the single-level optimization-based approach MTL. More importantly, DHM demonstrates improvements over SHM, indicating that it benefits from heterogeneous task learning.
\vspace{5pt}
\par
\noindent\textbf{Further Evidence.}  
To further support the above experimental results, we visualize the representation stability of the three models under Omniglot's 5-20 way heterogeneous task setting. As shown in Figure~\ref{fig:representation stability task type}, during the training process, DHM consistently achieves higher representation stability and better convergence efficiency.

\section{Method} \label{sec:method}
\begin{figure*}[t]
   \centering  %图片全局居中
    \includegraphics[width=\linewidth]{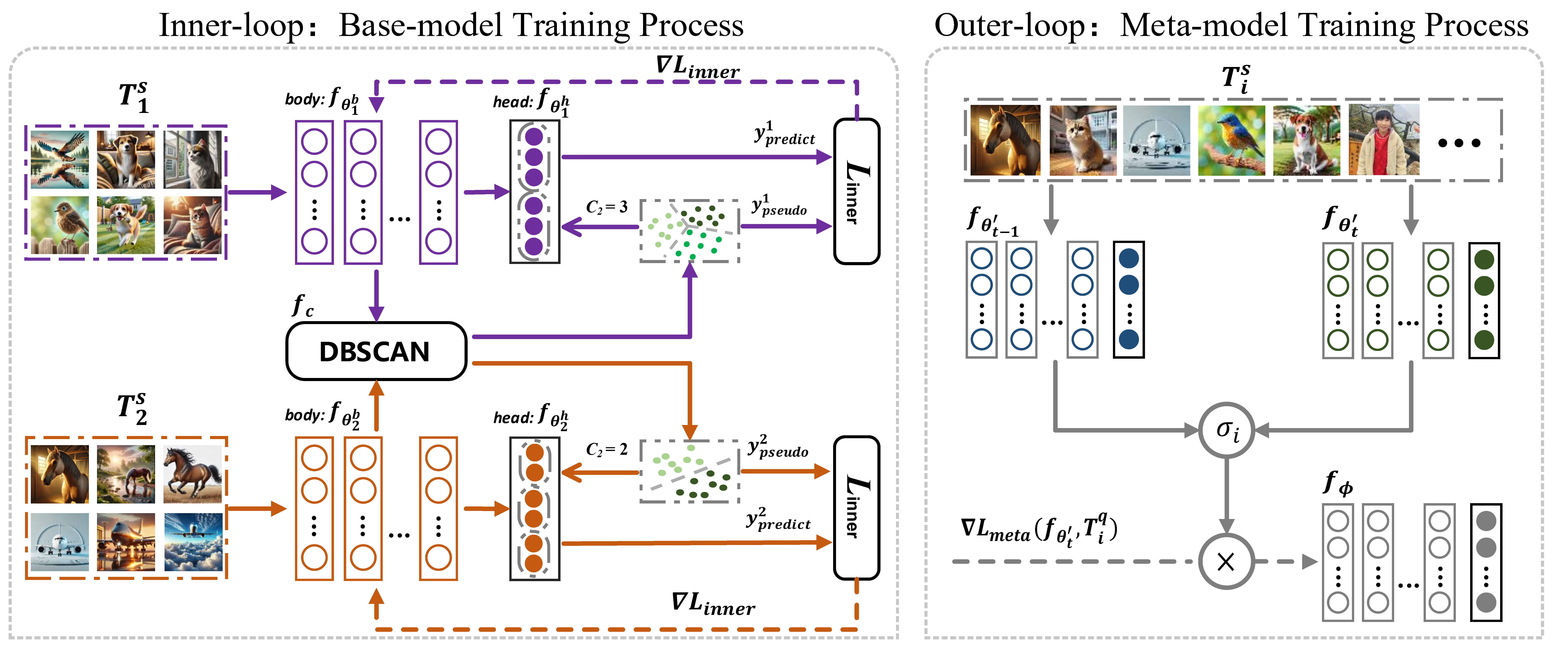}
\caption{Overview of MINO. (1) The left part shows the process of inner-loop. The base learner computes $y_{pseudo}$ and $y_{predict}$ by DBSCAN $f_c$ and a dynamic head $f_{\theta_i^h}$, respectively. Then, their cross-entropy is backpropagated. We apply a grouping classification technique at the head, \ie, $f_{\theta^h}$, to handle tasks with different numbers of clusters. (2) The right part shows the process of outer-loop. The meta gradient, \ie, $\nabla L_{meta}(f_{\theta'_t},T_i^q)$, is regulated by the meta-scaler, \ie, $\sigma_i$, where $\sigma_i$ is an adaptive scaler that operates based on the representation stability of $f_{\theta'_t}$ to ensure noise robustness.}
\label{fig:overview}
\end{figure*}

The analysis in Section~\ref{sec:entropy_limited_supervision} reveals several insights under the entropy-limited supervised setting.
\begin{enumerate}
    \item Meta-learning algorithms enable more efficient entropy utilization and better label noise robustness.
    \item Dynamic-head meta-learning can benefit from heterogeneous tasks.
    \item The representation stability $rs^i_t$ of the meta-model's head is sensitive to label noise, while the body is less affected.
\end{enumerate}

Figure~\ref{fig:overview} shows the overview of our method. Corresponding to each insight, (1) our method utilizes meta-learning for unsupervised few-shot tasks. (2) It leverages DBSCAN for heterogeneous task construction and the grouping classification trick~\cite{any_way} for heterogeneous task learning. (3) It employs a stability-based meta-scaler to regulate meta-gradient, enhancing the robustness to label noise.

\subsection{Meta-Learning}\label{sec:train_and_test}
During the training phase, by sampling dataset $D$, we obtain $\{T_1, T_2, ..., T_n\}$ as the meta-sample, where $T_i$ can be divided to support-set $T^s_i$ and query-set $T^q_i$~\cite{MAML}. For each $T_i$, we clone a base-model $f_{\theta_i}$ from the meta-model $f_{\phi}$. When all inner-loop rounds are completed and all tasks are learned, we perform an update to optimize $f_{\phi}$ and complete one step of outer-loop. The optimization problem corresponding to training a meta-model $f_{\phi}$ can be written as:
\vspace{-5pt}
\begin{align}
    \argmin_{\phi} \sum_{T_i \in D}L_{meta}(f_{\theta'_i}, T^q_i),
\end{align}
where $f_{\theta'_i}$ is the updated base model derived by Equation~\ref{eq:inner_update}. $L_{meta}$ is the query loss derived by Equation~\ref{eq:qry_loss}. The gradient-based optimization process can be written as:
\vspace{-5pt}
\begin{align}
\label{eq:outer_update}
f_{\phi}=f_{\phi}-\frac{\eta}{n}\nabla_{\phi}{\sum_{i=1}^{n}\sigma_i L_{meta}(f_{\theta'_i}, T^q_i)},
\end{align}
where $\eta$ is the meta-learning rate $\sigma_i$ is an adaptive scaler based on representation stability.

When all outer-loops are completed, we obtain a well-trained neural network $f_{\phi^*}$. During the testing phase, our method can complete the evaluation in an unsupervised manner. Depending on the requirements of the few-shot and zero-shot tasks, we can choose to perform or not to perform inner-loop to fine-tune $f_{\phi^*}$. 

\subsection{Unsupervised Heterogeneous Task Construction}\label{sec:UHT}
The process of constructing heterogeneous tasks is shown in the left part of Figure~\ref{fig:overview}. For the samples in $T_i$, we use the pretrained $f_{\theta^b}$ to project them in an embedding space and use a clustering algorithm $f_c$, to divide the embeddings into multiple categories. We use DBSCAN to dynamically partition clusters for adaptively constructing heterogeneous tasks. Compared to the commonly-used fixed clustering method K-means, DBSCAN alleviates meta-overfitting~\cite{Deconfounding,meta_without_memory} caused by homogeneous tasks. The pseudo label can be calculated by 
$\tilde{y}=f_b \circ f_{\theta^c}(x)$, and the predicted label can be calculated by $\bar{y}=f_{\theta^b} \circ f_{\theta^h}(x)$, where $\circ$ represents the composite function notation. The support loss of inner-loop can be written as:
\small
\begin{align}\label{eq:inner_loss}
L_{inner}(f_{\theta_i}, T^s_i)=\sum_{x \in T^s_i}L(f_{\theta^h_i} \circ f_{\theta^b_i}(x), f_{\theta^h_i} \circ f_c(x)).
\end{align}
\normalsize
A base learner after one inner-loop update can be written as
\begin{align}
\label{eq:inner_update}
{f_{\theta'_i}}={f_{\theta_i}}-\alpha\nabla L_{inner}(f_{\theta_i}, T^s_i).
\end{align}
The query loss can be written as
\small
\begin{equation}\label{eq:qry_loss}
L_{meta}(f_{\theta'_i}, T^q_i)=\sum_{x \in T^q_i}L(f_{{\theta'}^{h}_i} \circ f_{{\theta'}^{b}_i}(x), f_{{\theta'}^{h}_i} \circ f_c(x)).
\end{equation}
\normalsize
To avoid overfitting to sampling bias, we drop the clusters with relatively small scale, which is controlled by the hyperparameter ``min\_sample" in DBSCAN.

\begin{algorithm}[t]
\small
\caption{Dynamic Meta-Learning}
\label{alg:dynamic_meta}
\centering
\begin{tabular}{l}
\textbf{Require:} Meta-Dataset $D$; Meta-model $f_\phi=\{f_{\phi^h},f_{\phi^b}\}$; \\
\hspace{12mm} Clustering $f_c$; Meta-Learning rate $\eta$; \\
\hspace{12mm} Base-Learning rate $\alpha$. \\
\textbf{While} not done \textbf{do} \\
\hspace{5mm} Sample $\{T_1,...,T_n\}$ from $D$ \\
\hspace{5mm} \textbf{For each} $T_i$ \textbf{do} \\
\hspace{10mm} $f_{\theta_i} \leftarrow f_{\phi}$ \\
\hspace{10mm} \textbf{For each} Inner-loop \textbf{do} \\
\hspace{15mm} $L_{inner}(f_{\theta_i}, T^s_i) \leftarrow$ Equation~\ref{eq:inner_loss} \\
\hspace{15mm} $f_{\theta_i} \leftarrow f_{\theta_i}-\alpha\nabla_{\theta_i} L_{inner}(f_{\theta_i}, T^s_i)$ \\
\hspace{10mm} \textbf{End For} \\
\hspace{10mm} $L_{{meta}_i} \leftarrow$ Equation~\ref{eq:qry_loss} \\
\hspace{5mm} \textbf{End For} \\
\hspace{5mm} $f_{\phi}=f_{\phi}-\frac{\eta}{n}\nabla_{\phi}{\sum_{i=1}^{n}\sigma_i L_{Meta_i}}$ \\
\textbf{End While} \\
\end{tabular}
\end{algorithm}

\begin{table*}[t]
  \centering
  \caption{Accuracy in \% on unsupervised few-shot setting}
    \resizebox{\linewidth}{!}
    {
            \begin{tabular}{lcccccccccc}
    \toprule
          & \multicolumn{4}{c}{\textbf{Omniglot}} & \multicolumn{4}{c}{\textbf{Mini-Imagenet}} & \multicolumn{2}{c}{\textbf{Tiered-Imagenet}}\\
    \midrule
    \textbf{(way, shot)} & \textbf{(5, 1)} & \textbf{(5, 5)} & \textbf{(20, 1)} & \textbf{(20, 5)} & \textbf{(5, 1)} & \textbf{(5, 5)} & \textbf{(5, 20)} & \textbf{(5, 50)} & \textbf{(5, 1)} & \textbf{(5, 5)} \\ 
    UMTRA~\cite{UMTRA} & 82.97 ± 0.68 & 94.84 ± 0.60 & 73.51 ± 0.53 & 91.22 ± 0.59 & 39.14 ± 1.02 & 49.21 ± 0.90 & 57.66 ± 1.02 & 59.68 ± 1.17 & 41.03 ± 1.00 & 51.07 ± 0.92 \\
    CACTUs-MA-DC~\cite{CACTUs} & 67.98 ± 0.80 & 87.07 ± 0.63 & 47.48 ± 0.59 & 72.21 ± 0.54 & 39.11 ± 1.08 & 53.40 ± 0.88 & 63.00 ± 1.06 & 68.62 ± 1.02 & 41.00 ± 1.13 & 55.26 ± 0.82 \\
    CACTUs-Pr-DC~\cite{CACTUs} & 67.08 ± 0.72 & 82.97 ± 0.64 & 46.32 ± 0.51 & 65.75 ± 0.62 & 38.47 ± 1.14 & 53.01 ± 0.91 & 61.05 ± 1.09 & 62.82 ± 1.08 & 40.36 ± 1.17 & 54.87 ± 0.97 \\
    CACTUs-MA-Bi~\cite{CACTUs} & 57.84 ± 0.75 & 78.12 ± 0.67 & 34.98 ± 0.57 & 57.75 ± 0.58 & 36.13 ± 1.07 & 50.45 ± 0.90 & 60.97 ± 1.16 & 66.34 ± 1.14 & 38.02 ± 1.08 & 52.31 ± 0.94 \\
    CACTUs-Pr-Bi~\cite{CACTUs} & 53.58 ± 0.65 & 71.21 ± 0.68 & 32.79 ± 0.53 & 50.12 ± 0.51 & 36.05 ± 1.06 & 49.87 ± 0.92 & 58.47 ± 0.98 & 62.56 ± 1.20 & 37.94 ± 1.02 & 51.73 ± 0.89 \\
    PsCo~\cite{PsCo} & 93.25 ± 0.59 & 97.56 ± 0.34 & 82.06 ± 0.43 & 91.01 ± 0.45 & 42.90 ± 0.95 & 54.87 ± 0.94 & 65.66 ± 1.05 & 69.94 ± 1.11 & 44.79 ± 0.91 & 56.73 ± 0.90 \\
    Meta-GMVAE~\cite{Meta-GMVAE} & 93.81 ± 0.75 & 96.85 ± 0.50 & 81.29 ± 0.62 & 89.00 ± 0.51 & 41.78 ± 1.13 & 54.15 ± 0.87 & 62.11 ± 1.14 & 67.11 ± 1.21 & 43.67 ± 1.08 & 56.01 ± 0.83 \\
    \rowcolor{gray!20}MINO & 93.75 ± 0.46 & 97.71 ± 0.37 & 83.57 ± 0.41 & 94.69 ± 0.40 & 44.73 ± 1.01 & 60.38 ± 0.89 & 69.94 ± 0.95 & 73.39 ± 1.07 & 46.95 ± 1.05 & 62.14 ± 0.76 \\
    \midrule
    MAML (supervised)~\cite{MAML} & 94.46 & 98.83 & 84.6  & 96.29 & 46.81 & 62.13 & 71.03 & 75.54 & 48.70 & 63.99 \\
    \bottomrule
    \end{tabular}%
    }
  \label{tab:comp_unsup_few_class}%
\end{table*}%

\begin{table*}[t]
  \centering
  \caption{Accuracy in \% on unsupervised zero-shot setting}
  \resizebox{0.75\linewidth}{!}{
    \begin{tabular}{lcccccc}
    \toprule
     \textbf{Method}& \textbf{CIFAR-10} & \textbf{CIFAR-100} & \textbf{STL-10} & \textbf{ImageNet} & \textbf{Tiny-MINIST} & \textbf{DomainNet} \\
    \midrule
    DeepCluster~\cite{DeepCluster} & 63.02 ± 1.14 & 35.05 ± 1.11 & 52.21 ± 1.42 & 24.83 ± 0.95 & 78.63 ± 1.68 & 18.09 ± 0.88 \\
    IIC~\cite{IIC} & 64.05 ± 1.02 & 36.23 ± 1.27 & 53.78 ± 1.30 & 25.07 ± 0.88 & 79.21 ± 1.54 & 18.18 ± 0.74 \\
    MAE~\cite{MAE} & 68.83 ± 1.19 & 39.11 ± 1.52 & 56.19 ± 1.47 & 27.32 ± 1.14 & 81.03 ± 1.36 & 20.53 ± 1.03 \\
    NVAE~\cite{NVAE} & 67.43 ± 1.37 & 38.29 ± 1.45 & 55.78 ± 1.22 & 27.21 ± 0.98 & 81.52 ± 1.61 & 19.84 ± 0.79 \\
    BiGAN~\cite{BiGAN} & 67.61 ± 1.24 & 38.78 ± 1.19 & 55.24 ± 1.34 & 26.85 ± 1.07 & 80.09 ± 1.27 & 19.23 ± 0.95 \\
    ReSSL~\cite{ReSSL} & 70.27 ± 1.15 & 41.48 ± 1.60 & 58.52 ± 1.31 & 31.25 ± 1.13 & 83.17 ± 1.24 & 21.42 ± 0.92 \\
    Meta-GMVAE~\cite{Meta-GMVAE} & 71.73 ± 1.28 & 41.26 ± 1.02 & 58.69 ± 1.51 & 30.08 ± 1.57 & 84.65 ± 1.03 & 21.06 ± 1.37 \\
    MINO-kmeans & 69.06 ± 1.34 & 39.55 ± 1.27 & 57.05 ± 1.49 & 29.89 ± 1.83 & 83.15 ± 1.01 & 19.68 ± 1.16 \\
    \rowcolor{gray!20} MINO & 73.15 ± 1.09 & 43.34 ± 1.41 & 60.74 ± 1.35 & 31.12 ± 1.06 & 86.45 ± 1.28 & 22.68 ± 0.91 \\
    \bottomrule
    \end{tabular}}
  \label{tab:comp unsup class}%
\end{table*}%

\subsection{Dynamic Head with Stability-Based Meta-Scaler}\label{sec:DHM}
As shown in the left part of Figure~\ref{fig:overview}, the dynamic head $f_{\theta^b}$ employs the grouping classification trick~\cite{any_way} to adapt to heterogeneous tasks. According to the clustering counts $C_2$ given by DBSCAN, the dynamic head divides its classifier layer into multiple groups. Each group acts as an individual classifier.

As mentioned in Figure~\ref{fig:representation stability task type}, the task-specific head in meta-learning exhibits high representation stability without noise and low stability with noise. Therefore, we introduce a stability-based meta-scaler $\sigma_i$ to control the meta-gradient adaptively.
\begin{equation}
    \sigma_i = SVCCA(f_{\theta'_t}(T_i),f_{\theta'_{t-1}}(T_i)).
\end{equation}
The right part of Figure~\ref{fig:overview} shows that the meta-scaler $\sigma_i$ can limit the impact of the meta-gradient $\nabla L_{meta}(f_{\theta'_t} , T_i^q)$ on the meta-model $f_{\phi}$ according to noise level, thereby improving robustness to label noise. The process of MINO is outlined in Algorithm~\ref{alg:dynamic_meta}.

\section{Experiment}\label{sec:experiment}
The experimental evaluation is conducted on a high-performance computing platform equipped with dual A100 GPUs, an Intel Xeon Gold 6348 processor, and 512 TB of DDR4 memory. To ensure statistical validity, we implement over 5 independent tests for each experiment and calculate the mean and standard deviation of the results. The subsequent section presents our primary experimental findings and setups, with comprehensive technical details available in Supplementary Material C.

\begin{table}[t]
    \centering
    \caption{Unsupervised Datasets Description}
    \resizebox{\linewidth}{!}{
    \begin{tabular}{lcc}
        \toprule
        \textbf{Dataset} & \textbf{Classes Number $C_1$} & \textbf{Sample Number $m$} \\
        \midrule
        CIFAR-10 & 10 & 60000 \\
        CIFAR-100 & 100 & 60000 \\
        STL-10 & 10 & 130000 \\
        ImageNet & 1000 & 14197122 \\
        Tiny-ImageNet & 200 & 100000 \\
        DomainNet & 345 & 600000 \\
        Omniglot & 1623 & 32460 \\
        Mini-ImageNet & 100 & 60000 \\
        Tiered-ImageNet & 608 & 779165 \\
        \bottomrule
    \end{tabular}
    }
    \label{tab:Datasets}
\end{table}

\subsection{Unsupervised few-shot classification}\label{sec:unsupervised few-shot}
\noindent\textbf{Task.} 
Following the setup provided by~\citet{MAML}, we train and evaluate models on disjoint meta-training and meta-testing tasks. During the evaluation stage, we use the support set in meta-test tasks to fine-tune the models and then compare the accuracy of each model.
\vspace{5pt}
\par
\noindent\textbf{Datasets.}
We compare the models on Omniglot~\cite{omniglot}, Mini-Imagenet~\cite{miniImagenet}, and Tiered-Imagenet~\cite{tiered-Imagenet} datasets. The way and shot of the classification tasks are shown in Table~\ref{tab:comp_unsup_few_class}. The classes of the meta-training and the meta-testing tasks are disjoint from each other.
\vspace{5pt}
\par
\noindent\textbf{Baselines.} 
We compare MINO with several state-of-the-art few-shot unsupervised meta-learning classification algorithms, including CACTUs~\cite{CACTUs}, UMTRA~\cite{UMTRA}, Meta-GMVAE~\cite{Meta-GMVAE}, and PsCo~\cite{PsCo}. Using the meta-learning training methodology and unsupervised embedding algorithms, CACTUs manifests in four distinct implementations: CACTUs-MA-DC, CACTUs-Pr-DC, CACTUs-MA-Bi, and CACTUs-Pr-Bi. Pr denotes ProtoNet~\cite{ProtoNet}, MA signifies MAML, DC indicates DeepCluster, and Bi represents BiGAN. MINO, CACTUs, and UMTRA follow the same MAML setting given by~\citet{MAML}. PsCo and GMVAE reuse the same setting given by~\citet{PsCo} and~\citet{Meta-GMVAE}.
\vspace{5pt}
\par
\noindent\textbf{Configuration.} 
For MINO, we follow the baseline architecture given by~\citet{MAML}. We set epoch, inner-loop learning rate, outer-loop learning rate, meta-batch size, inner-loop step, and number of sample per task, as 30000, 0.05, 0.001, 8, 5, and 50 respectively. For DBSCAN, we set min\_samples and eps as 15 and 1.0, respectively. We maintain this setting in all subsequent experiments. For more details, please refer to Supplementary Material C.
\vspace{5pt}
\par
\noindent\textbf{Results.} 
Table~\ref{tab:comp_unsup_few_class} shows that, compared to the second-highest performing algorithm, PsCo, MINO achieves superior performance with a 2.85\% increase in accuracy, averaged across 10 tasks. Although all methods are based on meta-learning, CACTUs and UMTRA perform significantly worse than GMVAE and PsCo. This is because CACTUs and UMTRA heavily rely on the quality of generated pseudo-labels and do not enhance their robustness to label noise. In contrast, Meta-GMVAE and PsCo reduce dependence on annotation quality by leveraging autoencoder and contrastive learning strategies, respectively.

\subsection{Unsupervised Zero-shot classification}\label{sec:unsupervised zero-shot}
\noindent\textbf{Task.} 
In this task, we train and evaluate models on disjoint pre-training and evaluation datasets. In the evaluation phase, we do not use any labels to adjust models. We assess the models' zero-shot classification performance concerning accuracy metrics.
\vspace{5pt}
\par
\noindent\textbf{Datasets.}
We compare the models on CIFAR-10, CIFAR-100, STL-10, Imagenet, Tiny-Imagenet, and DomainNet. DomainNet is a dataset used for domain generalization tasks. We use one domain for testing and the remain 5 domains for both training and validation. We follow the protocol given by~\cite{ReSSL} for CIFAR-10, CIFAR-100, STL-10, Imagenet, and Tiny-Imagenet.
\vspace{5pt}
\par
\noindent\textbf{Baselines.}
We compare MINO with several unsupervised zero-shot learning algorithms, including DeepCluster~\cite{DeepCluster}, IIC~\cite{IIC}, MAE~\cite{MAE}, NVAE~\cite{NVAE}, BiGAN~\cite{BiGAN}, ReSSL~\cite{ReSSL}, and MINO with K-means. 
\vspace{5pt}
\par
\noindent\textbf{Results.}
Table~\ref{tab:comp_unsup_few_class} shows that, compared to the second-highest performing method Meta-GMVAE, MINO demonstrates superior accuracy with a 1.70\% improvement averaged across the 6 datasets. A notable comparison can be drawn between MINO-kmeans and DeepCluster, as both methodologies utilize clustering-based pseudo-label generators. The functionality of these generators is fundamentally limited by the annotation capabilities of K-means clustering, which operates within the constraints of an entropy-limited supervised setting. Consequently, the WCT-trained DeepCluster demonstrates reduced effectiveness. The autoencoder architectures MAE, NVAE, and BiGAN exhibit similar performance characteristics, possibly because their autoencoder structure causes the model to overfit to low-level features. ReSSL, implementing a relation-based self-supervised methodology, emphasizes inter-instance relationships. Both ReSSL and Meta-GMVAE bypass full-category annotation constraints, resulting in enhanced performance metrics.

\subsection{Comparison of WCT and Meta-Training}
\noindent\textbf{Evidence for Corollary.}
To further support Corollary~\ref{corollary:main}, we vary $C_2$ and $k$ to observe the performance of MAML and WCT under both entropy-limited, \ie, $H=mlog5$, and conventional supervised settings. To simulate the most challenging scenario, we ensure that samples are not shared between tasks, \ie, $n=\frac{m}{kC_2}$. Hyperparameters $C_1$, $C_2$, and $k$ are utilized during the training process, with the corresponding results shown in Figure~\ref{fig:entropy_vary} obtained from Omniglot 5-way 1-shot testing tasks. Figure~\ref{fig:entropy_vary} shows that the accuracy of entropy-limited MAML decreases as $C_2$ and $k$ increase. Furthermore, it gradually converges to the baseline given by entropy-limited WCT. These results further validate Corollary~\ref{corollary:main}. Analysis reveals that when $k$ increases to 10, MAML demonstrates a notable enhancement in accuracy. This improvement can be attributed to the more optimal distribution of sample numbers, effectively reducing bias introduced by small query sets~\cite{how_distribute}. In addition, the increasing values of $C_2$ and $k$ lead to a progressive expansion of the performance differential between MAML and entropy-limited MAML, highlighting the impact of the entropy-limited supervised setting on algorithmic performance.
\begin{figure}[t]
    \centering  %图片全局居中
    \includegraphics[width=\linewidth]{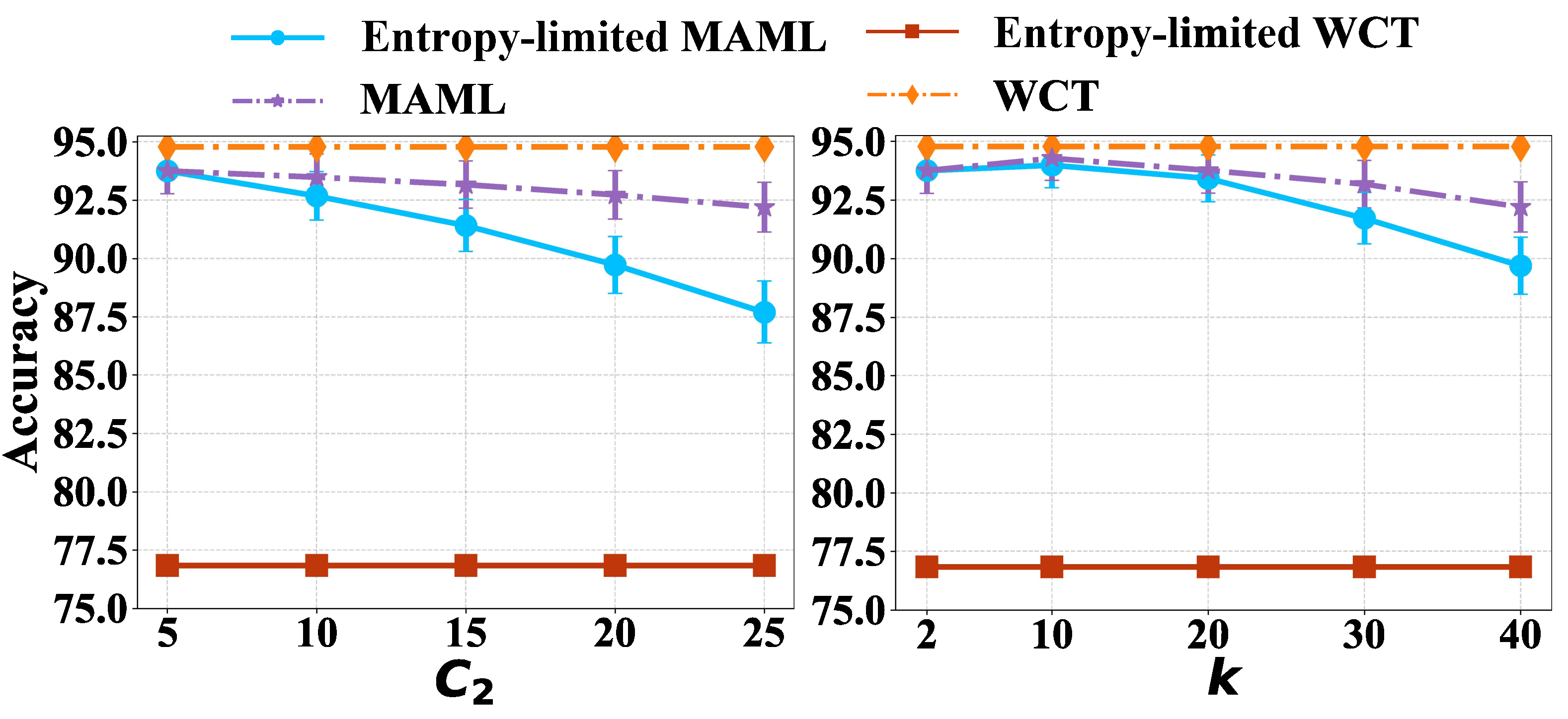}
\caption{Impact of $C_2$ and $k$ on entropy-limited and conventional supervised settings.}
\label{fig:entropy_vary}
\end{figure}

\vspace{5pt}
\par
\noindent\textbf{3D Few-Shot Classification.}
\begin{table}[t]
  \centering
  \caption{Accuracy in \% on the entropy-limited supervised setting for 3D few-shot classification tasks.}
    \resizebox{\linewidth}{!}
    {
            \begin{tabular}{lcccc}
    \toprule
          & \multicolumn{2}{c}{\textbf{ModelNet40}} & \multicolumn{2}{c}{\textbf{ShapeNetCore}} \\
    \cmidrule(lr){2-3} \cmidrule(lr){4-5}
    \textbf{(way , shot)} & \textbf{(5,1)} & \textbf{(5,5)} & \textbf{(5,1)} & \textbf{(5,5)} \\ 
    \midrule
    Entropy-limited WCT & 48.11 ± 0.75 & 56.28 ± 0.60 & 49.62 ± 0.68 & 62.95 ± 0.59 \\
    Entropy-limited MAML& 52.46 ± 0.72 & 59.46 ± 0.64 & 54.03 ± 0.71 & 65.32 ± 0.57 \\
    \bottomrule
    \end{tabular}%
    }
  \label{tab:3D_few_shot}%
\end{table}%
We compare WCT and MAML on ModelNet40 and ShapeNetCore~\cite{MobileNet}, which are 3D datasets. Setting $H=mlog5$ to simulate an entropy-limited supervised setting, we evaluate model performance on 5-way few-shot tasks. We use MVCNN~\cite{MVCN} to project 3D samples into 2D and adopt algorithmic setting as detailed in the above section to maintain configuration consistency. For few-shot tasks, we follow the setting given by~\citet{3D_baseline}. Table~\ref{tab:3D_few_shot} shows that the meta-learning algorithm MAML outperforms WCT under the entropy-limited setting. This result suggests that the Corollary~\ref{corollary:main} also holds for 3D few-shot classification tasks.

\subsection{Ablation Study}\label{sec:ablation}
We perform ablation experiments on unsupervised few-shot and zero-shot datasets, including Omniglot, CIFAR-100, and STL-10. Our experimental design comprises three distinct control groups. The first control group utilizes K-means clustering to generate homogeneous tasks, employing a static head for task learning. For the second group, we implement WCT in place of meta-learning methodology. The third group operates without the meta-scaler component for adaptive meta-updates. We maintain dataset parameters and algorithmic configurations as detailed in the above section. The results shown in Table~\ref{tab:ablation} indicate that MINO achieves superior performance, thereby validating the importance of heterogeneous task construction, meta-learning paradigm, and stability-based meta-scaler.

\subsection{Sensitivity Study}\label{sec:sensitive}
\noindent\textbf{Hyperparameters Selection.}
We perform sensitivity experiments on Omniglot and CIFAR-100. While meta-learning methods and unsupervised methods traditionally face challenges with hyperparameter tuning, our proposed method introduces only two hyperparameters. Our analysis examines the influence of the \textit{eps} (scanning radius of DBSCAN) and \textit{min\_samples} (the minimum number of samples within a cluster) parameters on overall performance. As shown in Figure~\ref{fig:sensitive}, \ourmethod~ consistently achieves robust and optimal performance when eps falls within $[10,20]$ and \textit{min\_samples} within $[0.5,1.5]$. These findings indicate that \ourmethod~exhibits minimal sensitivity to hyperparameter adjustments, lending itself well to practical implementation.

\begin{table}[t]
    \centering
        \caption{Effectiveness of each component. We compared the classification Accuracy\% on the unsupervised few-shot and zero-shot scenarios.}
        \resizebox{\linewidth}{!}{
            \begin{tabular}{lcccccc}
            \toprule
                  & \multicolumn{4}{c}{\textbf{Omniglot}} & \textbf{CIFAR-100} & \textbf{STL-10} \\
            \midrule
             \textbf{(w, s)}& \textbf{(5, 1)} & \textbf{(5, 5)} & \textbf{(20, 1)} & \textbf{(20, 5)} & \textbf{(100,0)} & \textbf{(10,0)} \\
            \textbf{W/O DBSCAN} & 87.12 & 92.67 & 73.49 & 80.73 & 37.58 & 52.27 \\
            \textbf{W/O Meta-Learning} & 74.32 & 90.91 & 51.83 & 77.42 & 32.37 & 47.75 \\
            \textbf{W/O Meta-Scaler} & 91.56 & 94.12 & 79.68 & 87.21 & 40.19 & 56.84 \\
            \textbf{Ours} & 93.81 & 96.85 & 81.29 & 89.00 & 42.34 & 58.74 \\
            \bottomrule
            \end{tabular}
        }
        \label{tab:ablation}
\end{table}

\begin{figure}[t]
    \centering
        \centering
        \includegraphics[width=\linewidth]{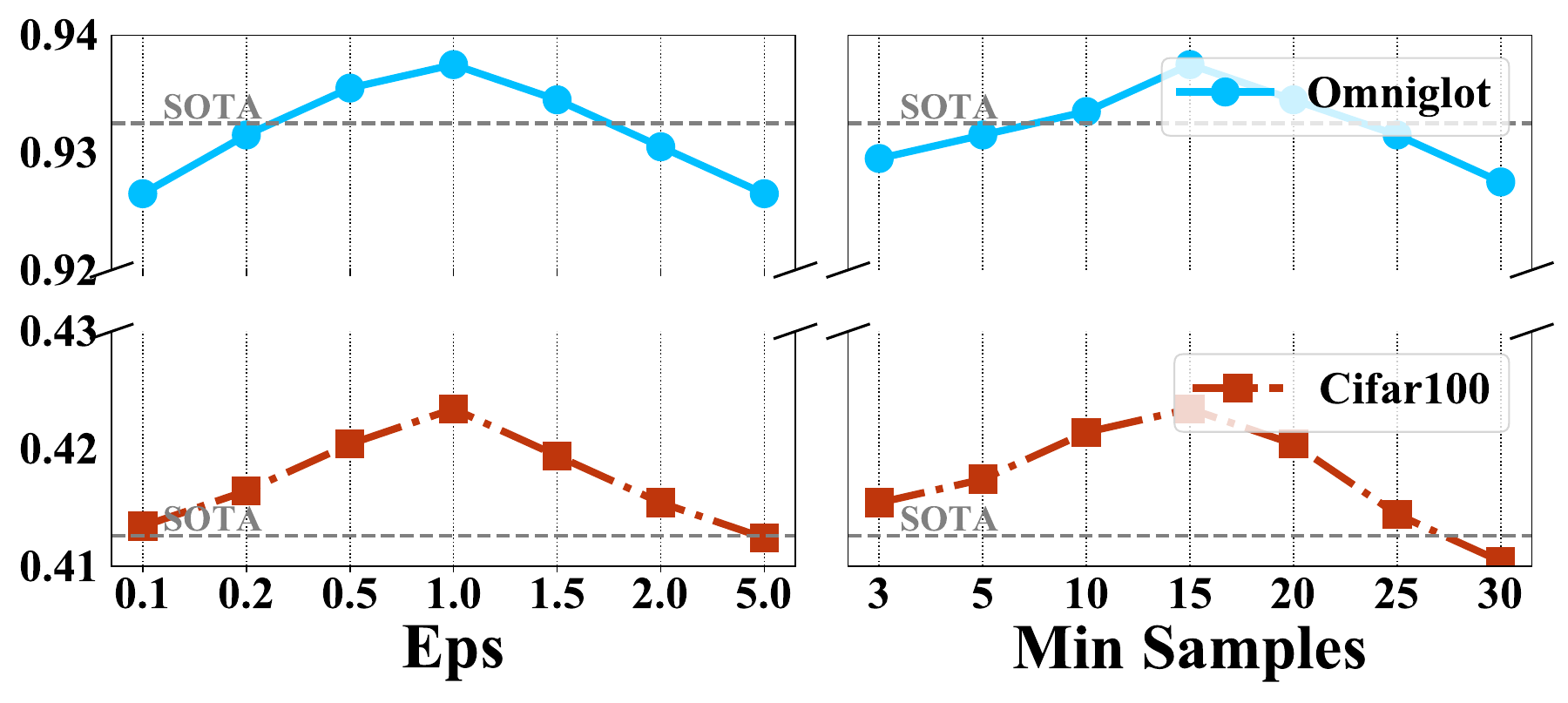}
        \caption{Sensitivity of MINO to new hyperparameters on unsupervised learning and unsupervised few-shot learning datasets.}
        \label{fig:sensitive}
\end{figure}
\vspace{5pt}
\par
\noindent\textbf{The Update Strategy Study.}
Our analysis examines the performance implications of implementing MAML-style versus ANIL-style update strategies for MINO, specifically regarding computational efficiency and accuracy metrics. Our empirical evaluation, shown in Table~\ref{tab:overhead tradeoff}, demonstrates that for the Omniglot dataset with 5-way 1-shot classification, the ANIL strategy implementation yields reduced computational time within the same target accuracy. Nevertheless, MINO-MAML demonstrates superior accuracy when the learning process converges. Therefore, in practice, we need to choose the update strategy based on the accuracy and overhead trade-off. Note that the above results differ from those in~\cite{ANIL}, possibly due to the complexity of the unsupervised tasks. Updating only the head in the inner loop is inadequate for learning unsupervised heterogeneous tasks. In scenarios where high accuracy is required, a MAML-style update strategy appears to be more effective.
\begin{table}[t]
  \centering
  \caption{Comparison of MINO-MAML and MINO-ANIL on accuracy vs. training-time trade-off.}
          \resizebox{0.65\linewidth}{!}{
    \begin{tabular}{ccc}
    \toprule
    Accuracy\% & MINO-ANIL & MINO-MAML \\
    \midrule
    50.0    & 655    & 748 \\
    60.0    & 1429   & 1681 \\
    70.0    & 2730   & 2992 \\
    80.0    & 5564   & 5642 \\
    90.0    & 13987  & 14723 \\
    93.5  & /     & 25717 \\
    \bottomrule
    \end{tabular}%
    }
  \label{tab:overhead tradeoff}%
  \vspace{1mm} % Adjust space before footnote
  \begin{minipage}{0.7\linewidth}
      \footnotesize Note: Training time is measured in seconds.
  \end{minipage}
\end{table}%
\section{Conclusion}
This research investigates the comparative advantages of meta-learning over conventional whole class training algorithms in few-shot classification scenarios. We introduce an entropy-limited supervised setting for fair comparison of meta-learning and whole class training. Our theoretical results show that meta-learning achieves a tighter generalization error bound under this setting. In addition, we reveal its efficient utilization of limited entropy and robustness to label noise and heterogeneous tasks. Building upon these insights, we propose \ourmethod, a meta-learning framework designed to enhance unsupervised performance. \ourmethod~ utilizes the adaptive clustering algorithm DBSCAN with a dynamic head for unsupervised task construction and a stability-based meta-scaler for robustness against label noise. Our experiments further support the theoretical results and demonstrate the superiority of \ourmethod~in few-shot and zero-shot tasks.

%%%%%%%%% REFERENCES
\clearpage
\section*{Acknowledgements}
This work was supported by the China Scholarship Council (CSC) under Grant No.~202406160071, the National Key Research and Development Program of China under Grant No.~2023YFB4502701, the National Natural Science Foundation of China under Grant No.~62232007, and the Pioneer Centre for AI, DNRF grant number P1.
{
    \small
    \bibliographystyle{ieeenat_fullname}
    \bibliography{main}
}

%%%%%%%%% APPENDICES
\clearpage
\appendix
\section{Proof of the Lemma 1}\label{sec:a_appendix}

\begin{lemma}\label{lemma:correct_num}
Let the sample volume of the dataset be \(m\), the number of classes be \(C\), the sample number per class be balanced, and the entropy consumed by annotation be \(H\). Then, the expectation of correct labeled samples, i.e., $m'$, is given by
\begin{align}
 m' = \frac{m}{C}e^{\frac{H}{m}}~~~~~~~~\textit{s.t.}~~H \in [0 , mlogC].
\end{align}
\end{lemma}

\begin{proof}
Since the numbers of samples per class are balanced, the categories for each sample have \( C \) equal possibilities. The entropy of one sample is calculated by
\begin{align*}
    H_i &= -\sum_{i=1}^{C} \frac{1}{C} \log \frac{1}{C} \\
    &=\log C.
\end{align*}
Given that the samples demonstrate I.I.D., we can determine that the total entropy equals 
$$
H_{sum}=m\log C.
$$
When the reduction of entropy due to annotation is $H$, the total entropy becomes
$$
H_{sum}-H.
$$
Then, after annotation, the number of possible states for each sample can be written as
$$
C'= e^{\frac{H_{sum}-H}{m}}.
$$
Then, the probability of correct labeling is
\begin{align*}
    p=&\frac{1}{e^{\frac{H_{sum}-H}{m}}} \\
    =&\frac{1}{e^{\frac{m\log C-H}{m}}} \\
    =&e^{\frac{H}{m}}/{C}.
\end{align*}
The expectation of correct labeling samples is
\begin{align*}
    m'=\frac{m}{C}e^{\frac{H}{m}}.
\end{align*}

\end{proof}

\section{Proof of Theorem 1, Theorem 2, and Corollary 1}\label{sec:b_appendix}
\begin{definition}[Uniform stability of single-task learning]
A single-task learning algorithm \( A \) has uniform stability \( \beta \) with respect to the loss function \( l \) if the following holds for any training set \( S \) and for all \( i \in \{1, \dots, m\} \), where \( S^{\backslash i} \) denotes the dataset with the \( i \)-th sample removed:
\[
|\hat{L}(A(S), S) - \hat{L}(A(S^{\backslash i}), S)| \leq \beta.
\]
\end{definition}

\begin{definition}[Uniform stability of meta-learning]
A meta-algorithm $\bm{\mathcal{A}}$ has uniform stability \( \tilde{\beta} \) with respect to the loss function \( l \) if the following holds for any meta-sample  $\bm{\mathcal{S}}$ and for all \( i \in \{1, \dots, n\} \), \( D \sim \tau \), \( S^{tr} \sim D^m \):
\[
|\hat{L}(\bm{\mathcal{A}}(\bm{\mathcal{S}})(S^{tr}), S^{tr}) - \hat{L}(\bm{\mathcal{A}}(\bm{\mathcal{S}}^{\backslash i})(S^{tr}), S^{tr})| \leq \tilde{\beta}.
\]
\end{definition}

\begin{theorem}[Generalization error of entropy-limited WCT]\label{theorem:single_gen_limited_entropy}
Let the sample volume of the dataset be \( m \), the number of classes be $C_1$, the annotation entropy be $H$, and the single-task learning algorithm $\bm{A}$ have uniform stability $\beta$. Then the generalization error $R_{gen}(\bm{A})$ is bounded by the following equation with probability at least \( 1 - \delta \) for any \( \delta \in (0,1) \),
\begin{equation}
R_{gen}(\bm{A}) \leq 2\beta + (4m\beta+M)\sqrt{\frac{C_1\ln(1/\delta)}{2me^{H/m}}}.
\end{equation}
\end{theorem}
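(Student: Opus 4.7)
The plan is to adapt the classical uniform-stability generalization bound of Bousquet and Elisseeff to the entropy-limited setting by substituting the effective sample count from Lemma~\ref{lemma:correct_num}. The starting point is the well-known inequality
\begin{equation*}
R_{gen}(\bm{A}) \leq 2\beta + (4m\beta + M)\sqrt{\frac{\ln(1/\delta)}{2m}},
\end{equation*}
obtained by applying McDiarmid's bounded-differences inequality to the random functional $\Phi(S) = R(\bm{A}(S)) - \hat{L}(\bm{A}(S), S)$, where the per-coordinate sensitivity is controlled by the uniform stability $\beta$ of $\bm{A}$ and the boundedness $M$ of the loss. This is the exact result cited from~\cite{generalization_stable_2} that the entropy-limited version must reduce to when $H \to m\log C_1$.

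First I would invoke Lemma~\ref{lemma:correct_num} to replace the nominal sample count $m$ governing the concentration tail by the expected number of correctly labeled samples $m' = \frac{m}{C_1} e^{H/m}$. The intuition is that only the correctly labeled subset contributes an unbiased signal to the empirical risk, so the effective sample size for the concentration step shrinks from $m$ to $m'$. Substituting into $\sqrt{\ln(1/\delta)/(2m)}$ yields $\sqrt{\ln(1/\delta)/(2m')} = \sqrt{C_1 \ln(1/\delta)/(2m e^{H/m})}$, matching the square-root factor in the claimed bound. The coefficient $(4m\beta + M)$ should remain unchanged because the algorithm still processes all $m$ samples, so the bounded-differences constant per coordinate is unaltered; only the effective number of independent, informative samples entering the tail bound is reduced.

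The main obstacle will be justifying this asymmetric substitution rigorously. Concretely, I would partition the training set into a clean subset of expected size $m'$ and a noise subset, apply McDiarmid's inequality on the clean subset to obtain the $1/\sqrt{m'}$ scaling, and control the noise contribution using the loss bound $M$ together with the unchanged stability constant $\beta$. Care must be taken in converting the expectation in Lemma~\ref{lemma:correct_num} into a high-probability statement over the labeling randomness (for instance via a Chernoff bound on the Bernoulli count of correctly labeled samples) and in absorbing the resulting lower-order terms into the stated constants. Once these technical pieces are in place, combining them reproduces the inequality exactly as claimed.
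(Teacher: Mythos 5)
Your proposal matches the paper's proof: both start from the classical uniform-stability bound $R_{gen}(\bm{A}) \leq 2\beta + (4m\beta+M)\sqrt{\ln(1/\delta)/(2m)}$ and substitute the effective (correctly labeled) sample count $m' = \tfrac{m}{C_1}e^{H/m}$ from Lemma~\ref{lemma:correct_num} into the concentration term only, leaving the $4m\beta$ coefficient untouched (the paper justifies this last point by noting that term is asymptotically killed by $\beta$ in Corollary~\ref{corollary:main}, whereas you argue the per-coordinate sensitivity is unchanged, but the resulting bound is identical). Your additional remarks about converting the expectation $m'$ into a high-probability count and handling the noisy subset go beyond what the paper actually does --- its proof is exactly the heuristic worst-case substitution you describe in your second paragraph.
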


\begin{proof}
    According to Theorem~\ref{theorem:single_gen}, under the conventional supervised classification setting, the conventional single-task learning algorithm $\bm{A}$, \ie, WCT, has the following generalization error upper bound,
    $$
        R_{gen}{\bm{A}} \leq 2\beta + (4m\beta + M) \sqrt{\frac{\ln(1/\delta)}{2m}}.
    $$
    Under entropy-limited setting, Lemma~\ref{lemma:correct_num} derives the number of correct labeling samples. As indicated in Section 2.2, some algorithms are robust to label noise. However, we consider the worst-case scenario here, \ie, only samples with correct labels are taken into account. As a result, we replace $m$ in the above equation with $m'$ given by Lemma~\ref{lemma:correct_num}, \ie,
    $$
    R_{gen}(\bm{\mathcal{A}}) \leq 2\beta + 2\tilde{\beta}+(4n\tilde{\beta}+M)\sqrt{\frac{kC^2_2\ln(1/\delta)}{2me^{H/m}}}.
    $$
    Note that we don't replace $m$ in $4m\beta$, because it will be asymptotically eliminated by $\beta$ in the derivation of Corollary~\ref{corollary:main}.
\end{proof}

\begin{theorem}[Generalization error of entropy-limited meta-learning]\label{theorem:meta_gen_limited_entropy}
Let the sample volume of the dataset be \( m \), the number of classes per task be $C_2$, the number of samples per class be $k$, the number of tasks be $n$, the annotation entropy be $H$, the base-learner $\bm{A}$ has uniform stability $\beta$, and the meta-learner $\bm{\mathcal{A}}$ have uniform stability $\tilde{\beta}$. Then generalization error $R_{gen}(\bm{\mathcal{A}})$ is bounded by the following equation with probability at least \( 1 - \delta \) for any \( \delta \in (0,1) \),
\begin{equation}\label{eq:meta_generalization_bound}
R_{gen}(\bm{\mathcal{A}}) \leq 2\beta + 2\tilde{\beta}+(4n\tilde{\beta}+M)\sqrt{\frac{kC^2_2\ln(1/\delta)}{2me^{H/m}}}.
\end{equation}
\end{theorem}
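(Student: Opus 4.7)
The plan is to mirror the proof of Theorem~\ref{theorem:single_gen_limited_entropy}: start from the clean meta-learning uniform-stability bound and then push the entropy accounting through Lemma~\ref{lemma:correct_num}. Concretely, I would first invoke the meta-generalization result of~\citet{generalization_stable_2}, which, under conventional supervision with $n$ tasks, a base-learner of uniform stability $\beta$, and a meta-learner of uniform stability $\tilde{\beta}$, yields
\begin{equation*}
R_{gen}(\bm{\mathcal{A}}) \leq 2\beta + 2\tilde{\beta} + (4n\tilde{\beta} + M)\sqrt{\frac{\ln(1/\delta)}{2n}}
\end{equation*}
with probability at least $1-\delta$. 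This already produces the $2\beta + 2\tilde{\beta}$ head of the target bound, contributing one stability offset per level of the bi-level optimization.

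The next step is to translate the entropy budget $H$ into an effective task count $n'$ by specializing Lemma~\ref{lemma:correct_num} to the meta-learning regime. Within a single episode the base learner need only distinguish $C_2$ classes, so the lemma is applied with $C\leftarrow C_2$, giving $m' = (m/C_2)\,e^{H/m}$ correctly labeled samples available overall. Every valid task consumes $kC_2$ such samples (a $k$-shot group from each of the $C_2$ classes), so in the worst case only $n' = m'/(kC_2) = m\,e^{H/m}/(kC_2^{\,2})$ tasks can be fully supervised by correctly labeled data. Substituting $n\leftarrow n'$ inside the concentration term yields exactly $\sqrt{kC_2^{\,2}\ln(1/\delta)/(2me^{H/m})}$, matching the claim. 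Following the same convention adopted in the proof of Theorem~\ref{theorem:single_gen_limited_entropy}, I leave the $n$ in the prefactor $4n\tilde{\beta}$ unchanged, so that it can later be asymptotically absorbed via $\tilde{\beta}\sim o(\sqrt{1/n})$ when Corollary~\ref{corollary:main} is derived.

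The main obstacle is justifying rigorously why $C_2$, rather than the dataset-level class count $C_1$, is the correct quantity to plug into Lemma~\ref{lemma:correct_num}. The argument I would spell out is that annotation entropy is consumed \emph{per task}: in an episodic construction the label alphabet visible to the base learner is already restricted to $C_2$, so only $\log C_2$ bits per sample are required to specify a correct label, making the per-sample success probability $e^{H/m}/C_2$. A secondary subtlety is that, in practice, samples can be shared across tasks, which inflates the true $n$ well above $n'$; the stated bound is therefore conservative by design, and it is precisely this slackness that the paper later exploits in the discussion following Corollary~\ref{corollary:main} to argue that the meta-learning bound is in reality even tighter than the one proved here.
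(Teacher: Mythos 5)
Your proposal matches the paper's own proof essentially step for step: both start from the clean uniform-stability meta-bound of~\citet{generalization_stable_2}, take the worst-case task count $n=m/(kC_2)$, apply Lemma~\ref{lemma:correct_num} with the per-task class count $C_2$ to replace $m$ by $m'=(m/C_2)e^{H/m}$ inside the concentration term, and deliberately leave the $4n\tilde{\beta}$ prefactor untouched so it can be absorbed in Corollary~\ref{corollary:main}. Your explicit justification for plugging $C_2$ (rather than $C_1$) into the lemma is a point the paper leaves implicit, but it does not change the argument.
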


\begin{proof}
    According to Theorem~\ref{theorem:single_gen}, under the conventional supervised classification setting, the meta-learning algorithm $\bm{\mathcal{A}}$, has the following generalization error upper bound,
    \[
    R_{gen}(\bm{\mathcal{A}}) \leq  2\tilde{\beta} + (4n\tilde{\beta} + M) \sqrt{\frac{\ln(1/\delta)}{2n}} + 2\beta.
    \]
    The number of tasks $n$, in the worst-case scenario, satisfies
    $$
    n=\frac{m}{kC_2}
    $$
    Similar to the proof of Theorem~\ref{theorem:single_gen_limited_entropy}, replace $m$ in the above equation with $m'$ given by Lemma~\ref{lemma:correct_num}, we have
    $$
    R_{gen}(\bm{\mathcal{A}}) \leq 2\beta + 2\tilde{\beta}+(4n\tilde{\beta}+M)\sqrt{\frac{kC^2_2\ln(1/\delta)}{2me^{H/m}}}.
    $$
    Similar to the proof of Theorem~\ref{theorem:single_gen_limited_entropy}, we don't replace $n$ in $4n\tilde{\beta}$.
\end{proof}

\begin{corollary}\label{corollary:main}
Let the base-level stability $\beta \sim o(\sqrt{1/m})$, the meta-level stability $\tilde{\beta} \sim o(\sqrt{1/n})$, and the entropy resource $H$ be equal for each algorithm. Then, the meta-learning algorithm $\bm{\mathcal{A}}$ has a tighter generalization error upper bound than the single-task learning algorithm $\bm{A}$ when
\begin{equation}\label{eq:theorem_result}
    C_2^2 \cdot k < C_1.
\end{equation}
\end{corollary}
\begin{proof}
    According to Bousquet~\cite{stability_and_generalization} and ~\citet{generalization_stable_2} et.al., the uniform stability $\beta$ and $\tilde{\beta}$ of algorithms decrease as the dataset scale increases, typically satisfying $\beta \sim o(\sqrt{1/m})$ and $\tilde{\beta} \sim o(\sqrt{1/n})$, respectively. To ensure that the generalization error formula holds with high probability, $\delta$ is typically minimized. When $m$ is sufficiently large, these conditions ensure that the generalization error of $\bm{A}$ is dominated by
    $$
    M\sqrt{\frac{C_1\ln(1/\delta)}{2me^{H/m}}},
    $$
    and the generalization error of $\bm{\mathcal{A}}$ is dominated by
    $$
    M\sqrt{\frac{kC^2_2\ln(1/\delta)}{2me^{H/m}}}.
    $$
    In Theorem~\ref{theorem:meta_gen_limited_entropy}, $n$ is significantly underestimated. Therefore, as long as $kC_2^2 < C_1$, or even when $kC_2^2 \sim O(C_1)$, the meta-learning algorithm $\bm{\mathcal{A}}$ admits a much tighter upper bound on the generalization error.
\end{proof}

\begin{theorem}[Generalization error of single-task learning~\cite{generalization_stable_2}]\label{theorem:single_gen}
For any data distribution \( \mathcal{D} \) and training set \( S \) with \( m \) samples, if a single-task learning algorithm \( \bm{A} \) has uniform stability \( \beta \) with respect to a loss function \( l \) bounded by \( M \), then the following statement holds with probability at least \( 1 - \delta \) for any \( \delta \in (0,1) \):

\[
R(\bm{A}, \mathcal{D}) \leq \hat{R}(\bm{A},S) + 2\beta + (4m\beta + M) \sqrt{\frac{\ln(1/\delta)}{2m}}.
\]

\end{theorem}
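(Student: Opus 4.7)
The plan is to follow the classical two-step recipe of Bousquet and Elisseeff for stability-based generalization bounds: first control the expected generalization gap by the stability constant $\beta$, and then upgrade this in-expectation statement to a high-probability one by treating the gap as a function of $m$ i.i.d.\ samples that enjoys a bounded-difference property, at which point McDiarmid's inequality delivers exactly the $\sqrt{\ln(1/\delta)/(2m)}$ form in the statement.

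First I would define the scalar functional $\Phi(S) = R(\bm{A}, \mathcal{D}) - \hat{R}(\bm{A}, S)$ and compute $\mathbb{E}_S[\Phi(S)]$. The standard renaming trick writes $\mathbb{E}_S[R(\bm{A}, \mathcal{D})] = \mathbb{E}_{S, z}[l(\bm{A}(S), z)]$, and uses the fact that relabelling the independent draw $z$ as $z_i$ while swapping sample $z_i$ out of $S$ produces two empirical-loss expressions on datasets that differ by a single element; averaging over $i$ and invoking uniform stability (routed through $\bm{A}(S^{\setminus i})$) then yields $|\mathbb{E}_S[\Phi(S)]| \leq 2\beta$. The factor of $2$ reflects the two removals needed to bridge $\bm{A}(S)$ and $\bm{A}$ trained on the swapped dataset.

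Next I would verify a bounded-difference property for $\Phi$. Replacing $z_i \in S$ by an independent copy $z_i'$ to form $S^{(i)}$, the change in $R(\bm{A}, \mathcal{D}) = \mathbb{E}_z[l(\bm{A}(S), z)]$ is controlled by $2\beta$ via two applications of the leave-one-out stability passing through $\bm{A}(S^{\setminus i})$, while the change in $\hat{R}(\bm{A}, S)$ splits into the $m-1$ indices $j \neq i$, each contributing at most $2\beta$ but only $1/m$ of it, plus the single index $i$ whose term changes by at most $M/m$ because $l$ is globally bounded by $M$. Combining these yields a per-coordinate bounded-difference constant of order $4\beta + M/m$, and applying McDiarmid's inequality with $\sum_{i=1}^m c_i^2 \leq m(4\beta + M/m)^2$ gives a deviation of $(4m\beta + M)\sqrt{\ln(1/\delta)/(2m)}$ with probability at least $1-\delta$.

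The main obstacle will be tracking the constants so that the factor $4m\beta + M$ emerges exactly rather than a slightly weaker constant; in particular, one must separate the $M/m$ contribution from the single swapped coordinate from the $\beta$-type contributions of the other $m-1$ coordinates, and must use the stability twice whenever passing between $\bm{A}(S)$ and $\bm{A}(S^{(i)})$ via the intermediate $\bm{A}(S^{\setminus i})$. Once those constants are pinned down, combining $\mathbb{E}_S[\Phi(S)] \leq 2\beta$ with the McDiarmid tail for $\Phi(S) - \mathbb{E}_S[\Phi(S)]$ rearranges immediately into the advertised bound $R(\bm{A}, \mathcal{D}) \leq \hat{R}(\bm{A}, S) + 2\beta + (4m\beta + M)\sqrt{\ln(1/\delta)/(2m)}$ with probability at least $1-\delta$.
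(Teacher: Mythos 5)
Your proposal is correct and is exactly the classical Bousquet--Elisseeff argument (expectation bound of $2\beta$ via the renaming trick, per-coordinate bounded difference $4\beta + M/m$, then McDiarmid), which recovers the stated constant $(4m\beta+M)\sqrt{\ln(1/\delta)/(2m)}$ precisely. Note that the paper itself gives no proof of this theorem: it is imported verbatim from the cited stability literature, and your sketch reproduces the standard proof of that imported result, so there is nothing in the paper's treatment to diverge from.
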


\begin{theorem}[Generalization error of meta-learning~\cite{generalization_stable_2}]\label{theorem:meta_gen}
For any task distribution \( \tau \) and meta-sample \( \bm{\mathcal{S}} \) with \( n \) tasks, if a meta-algorithm \( \bm{\mathcal{A}} \) has uniform stability \( \tilde{\beta} \) and the inner-task algorithm \( A(S) \) has uniform stability \( \beta \) with respect to a loss function \( l \) bounded by \( M \), then the following statement holds with probability at least \( 1 - \delta \) for any \( \delta \in (0,1) \):

\[
R(\bm{\mathcal{A}}(\bm{\mathcal{S}}), \tau) \leq \hat{R}(\bm{\mathcal{A}}(\bm{\mathcal{S}}), S) +  2\tilde{\beta} + (4n\tilde{\beta} + M) \sqrt{\frac{\ln(1/\delta)}{2n}} + 2\beta.
\]
\end{theorem}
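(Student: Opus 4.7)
The plan is to adapt the classical uniform-stability-implies-generalization argument of Bousquet--Elisseeff from the single-task setting to the two-level meta-learning setting. The target bound has three structurally distinct pieces, so the proof should be organized accordingly: a $2\tilde{\beta}$ piece from bounding the \emph{expected} meta-level generalization gap, a $(4n\tilde{\beta}+M)\sqrt{\ln(1/\delta)/(2n)}$ piece from concentrating that gap around its expectation via McDiarmid's bounded-differences inequality, and an additive $2\beta$ piece to account for the fact that each task's risk is observed only through a finite training set of size $m$, not its full distribution.

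First I would fix notation, writing the population meta-risk $R(\bm{\mathcal{A}}(\bm{\mathcal{S}}),\tau)$ as a nested expectation over $D\sim\tau$, $S^{tr}\sim D^m$, and a fresh test sample from $D$, and the empirical meta-risk $\hat{R}(\bm{\mathcal{A}}(\bm{\mathcal{S}}),S)$ as the matching average over the $n$ tasks composing $\bm{\mathcal{S}}$. Viewing the generalization gap as a function $\Phi(\bm{\mathcal{S}})$ of the $n$ i.i.d.\ tasks, the central step is to compute its bounded-differences constant. Replacing the $i$-th task with an independent copy changes the meta-algorithm's output, and by the meta-stability assumption the induced change in loss on any fixed task is at most $\tilde{\beta}$; this shifts each of the $n-1$ unchanged empirical summands and also the population term by at most $\tilde{\beta}$, while the one replaced summand can swing by at most $M/n$ since the loss is bounded by $M$. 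Collecting these yields a per-coordinate stability of order $(4\tilde{\beta}+M/n)$, and McDiarmid then delivers the stated $\sqrt{\ln(1/\delta)/(2n)}$ concentration term.

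Next I would bound $\mathbb{E}[\Phi(\bm{\mathcal{S}})]$. The standard symmetrization/renaming trick reinterprets the expectation of the population term as an expectation where one training task is swapped with the test task; the meta-stability hypothesis then controls the resulting discrepancy by $2\tilde{\beta}$, mirroring exactly the single-level Bousquet--Elisseeff argument but applied at the task level rather than the sample level. At this stage, however, the ``loss on a task'' still involves $\bm{\mathcal{A}}(\bm{\mathcal{S}})(S^{tr})$ on a \emph{finite} training sample $S^{tr}\sim D^m$, not on the true task distribution $D$, so one more step is required: invoking the base-learner stability $\beta$ and Theorem~\ref{theorem:single_gen} (at the within-task level) shows that the difference between the expected empirical task-loss and the expected population task-loss is at most $2\beta$. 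Adding this to the meta-level bound produces the $2\tilde{\beta}+2\beta$ expectation term.

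Finally I would combine the expectation bound and the McDiarmid tail bound by a single high-probability union, giving the claimed inequality. The main obstacle I anticipate is the bounded-differences computation at the meta level: one must carefully separate the effect of replacing a task on the $n-1$ ``unchanged'' terms (each perturbed only through the meta-algorithm, hence at most $\tilde{\beta}$) from the effect on the single ``replaced'' summand (bounded crudely by $M/n$), and simultaneously verify that the population term responds only through the meta-algorithm and is thus also controlled by $\tilde{\beta}$. Getting the coefficient $4n\tilde{\beta}+M$ in the stated form requires tracking these contributions symmetrically and multiplying by $n$ inside the McDiarmid $\sqrt{\sum c_i^2}$ factor; once this accounting is done cleanly, the remainder of the argument is algebraic bookkeeping and a direct appeal to Theorem~\ref{theorem:single_gen} for the within-task step.
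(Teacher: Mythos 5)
This theorem is not proved in the paper at all: it is imported verbatim from \cite{generalization_stable_2} as a known building block (alongside Theorem~\ref{theorem:single_gen}) and only \emph{used} in the proofs of the entropy-limited Theorems 1--2 and Corollary~\ref{corollary:main}. So there is no in-paper proof to compare against; what you have written is a reconstruction of the argument from the cited source, and your outline is the standard one and essentially correct. The decomposition into an in-expectation term ($2\tilde{\beta}$ via the task-level renaming trick), a McDiarmid concentration term with per-coordinate constant $4\tilde{\beta}+M/n$ yielding $(4n\tilde{\beta}+M)\sqrt{\ln(1/\delta)/(2n)}$, and an additive $2\beta$ for the within-task sampling is exactly how stability-based meta-learning bounds of this form are derived, and your bounded-differences accounting is right.

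Two small points of hygiene. First, for the $2\beta$ term you should invoke only the \emph{in-expectation} consequence of base-level uniform stability, namely $|\mathbb{E}[R-\hat{R}]|\leq 2\beta$, not the full high-probability statement of Theorem~\ref{theorem:single_gen}; the latter carries its own McDiarmid term and would either inflate the bound or force an extra union bound that the stated inequality does not pay for. Second, no ``high-probability union'' is needed at the end: the expectation bound $\mathbb{E}[\Phi(\bm{\mathcal{S}})]\leq 2\tilde{\beta}+2\beta$ is deterministic, so you simply add it to the single McDiarmid tail event. With those two adjustments your sketch matches the cited result; if you intend to present it as a full proof you would still need to verify the measurability/definition details of the nested risks (the paper itself glosses over the fact that $\hat{R}(\bm{\mathcal{A}}(\bm{\mathcal{S}}),S)$ is evaluated on finite per-task samples, which is precisely where the $2\beta$ enters).
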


% \begin{proof}
% When $\beta<\sqrt{1/m}$ and $\tilde{\beta}<\sqrt{1/n}$, the generalization error upper bound of the single-task learning algorithm $A$ and the meta-learning algorithm $\bm{\mathcal{A}}$ is determined by $\sqrt{1/m}$ and $\sqrt{1/n}$, respectively (detail in Theorem~\ref{theorem:single_gen} and Theorem~\ref{theorem:meta_gen}). Where task number $n$ is given by
% \begin{equation}\label{eq:calculate_n}
%     n \gg\frac{m}{kC_2}.
% \end{equation}
%     Note that the inequality sign is taken here because there may be overlap samples across tasks. Although some algorithms can leverage mislabeled noise to improve model robustness, we consider the worst-case scenario here. We take into account only the impact of correctly labeled samples on generalization ability. Introducing Lemma~\ref{eq:correct_num}, the generalization error of $A$ is bound by
% \begin{equation}
%     \sqrt{\frac{C_1}{me^{\frac{H}{m}}}}
% \end{equation}
%     Incorporating Equation~\ref{eq:calculate_n}, the generalization error bound of $\bm{\mathcal{A}}$ is much less than
% \begin{equation}
%     \sqrt{\frac{kC_2^2}{me^{\frac{H}{m}}}}
% \end{equation}
% As a result, even under the worst scenario, the meta-learning algorithm $\bm{\mathcal{A}}$ has tighter generalization error upper bound, when
% $$
% C_2^2 \cdot k < C_1.
% $$
% \end{proof}
\section{Experimental Details}\label{sec:c_appendix}
\subsection{Dataset setup}\label{sec:dataset setup}
\noindent\textbf{Omniglot.}
The raw dataset contains 1628 classes, we split the classes of training set, evaluation set, test set into 800: 400: 432. We use Omniglot in three scenarios. The first scenario is in Section 2.2. We perform supervised few-shot learning with label noise. We randomly mask the labels of the samples in the training set according to the noise ratio (\ie, 0\%, 15\%, 30\%). Depending on the training method, we can construct these raw data into task followed by~\citet{MAML}, or use them directly for whole class training followed by~\citet{a_good_embedding}. The second scenario is in Section 2.3. We perform supervised few-shot learning with heterogeneous tasks. When constructing heterogeneous tasks, we sample a variable number of classes, to ensure the difference in the way of tasks (\ie, 5-20 way), and further to ensure the heterogeneity. The third scenario is in Section 4.1. We perform unsupervised few-shot learning. We follow the protocol given by~\citet{CACTUs}.
\vspace{5pt}
\par
\noindent\textbf{Mini-Imagenet.}
The raw Mini-Imagenet contains 100 classes, we the split classes of training set, evaluation set, test set into 64: 16: 20. We use Mini-Imagenet in three scenario. The details of the setup of the three experimental scenarios are the same as Omniglot. With the except that we construct 5-10 way heterogeneous task in Section 2.3.
\vspace{5pt}
\par
\noindent\textbf{CIFAR-10, CIFAR-100, STL-10, Imagennet, and Tiny Imagenet.}
For CIFAR-10, CIFAR-100, STL-10, Imagennet, and Tiny Imagenet datasets, we follow the protocol given by~\citet{ReSSL}. They are used for unsupervised zero-shot learning, so we mask all the labels in training set.
\vspace{5pt}
\par
\noindent\textbf{DomainNet.}
DomainNet is a domain adaption dataset. We use it to evaluate algorithms' ability of unsupervised zero-shot domain adaption. It contains 6 domain with 345 classes for each domain. We use one domain for test and the remain 5 domain for both training and validating. Note that when constructing tasks, we sample classes from the same domain and we mask all the labels in training set.
\vspace{5pt}
\par
\noindent\textbf{MobileNet40 and ShapeNetCore.}
ModelNet40 contains 12311 CAD models across 40 categories, primarily used for 3D shape classification and point cloud analysis. ShapeNetCore includes over 51300 3D models spanning 55 categories, serving as a benchmark for 3D classification, segmentation, and reconstruction. For the few-shot classification task, we follow the settings given by~\citet{3D_baseline}.

\subsection{Algorithm Setup}\label{sec:algorithm setup}
\par
\noindent\textbf{MINO.}
We use MINO in both unsupervised zero-shot and few-shot scenario. In unsupervised few-shot datasets, we follow the same backbone architecture given by~\href{https://github.com/dragen1860/MAML-Pytorch}{github.com/dragen1860/MAML-Pytorch}. We set epoch, inner-loop learning rate, outer-loop learning rate, meta-batch size, inner-loop step, and number of sample per task, as 30000, 0.05, 0.001, 8, 5, and 50 respectively. For DBSCAN, we set min\_samples and eps as 15 and 1.0, respectively.
In unsupervised zero-shot datasets (except of DomainNet), we follow the same backbone architecture given by~\href{https://github.com/xu-ji/IIC}{github.com/xu-ji/IIC}, \ie, ResNet and VGG11. We set epoch, inner-loop learning rate, outer-loop learning rate, meta-batch size, adaption steps for evaluation and sub-sample size, as 80000, 0.001, 0.001, 8, 0, and 100 respectively. For DBSCAN , we set min\_samples and eps as 15 and 1.0, respectively. For DomainNet dataset, we use ResNet-9 as backbone architecture, which is the same as~\href{https://github.com/liyunsheng13/DRT}{github.com/liyunsheng13/DRT}. The other configuration is the same as other unsupervised zero-shot datasets. Note that for fair comparison, we keep the network structure of other methods consistent with that of MINO to ensure that the neural networks have the same scale. As for the hyperparameters, we follow the settings provided in the original paper.
\vspace{5pt}
\par
\noindent\textbf{WCT, MAML, ANIL, and MTL.}
In Section 2.2, Section 2.3, and Section 4.3, we use WCT, MAML, ANIL, and MTL to perform experiments, on Omniglot and Mini-Imagenet datasets. For MAML, we leverage the "body" given by~\cite{match_network}, which has 4 convolution modules. Each module consist of a 3 × 3 convolutions and 64 filters, followed by a batch normalization, a ReLU nonlinearity, and 2 × 2 max-pooling. We utilize the "head" followed the baseline classifier given by~\citet{MAML}. For Omniglot, we used strided convolutions instead of max-pooling. For ANIL, its difference with MAML lies solely in the strategy for updating the head~\cite{ANIL}. For WCT, we use the same neural network architecture and learning configuration, except that its head is fixed, and the length of the head equals the number of whole classes. For MTL, we also maintain the same neural network architecture and learning configuration, except that it uses a single-level optimization strategy. In Section 2.3, for SHM, we train a model for each way of tasks, and ultimately take the average testing performance of the models. For DHM and MTL, we train the model with the train set consisting of a mixture of the heterogeneous tasks, and ultimately evaluating its performance directly on the test set.
\vspace{5pt}
\par
\noindent\textbf{PsCo, Meta-GMVAE, UMTRA, and CACTUs.}
We reuse the configuration given by~\citet{PsCo},\citet{Meta-GMVAE},\citet{UMTRA}, and~\citet{CACTUs}, since our test scenarios are the same as theirs.
\vspace{5pt}
\par
\noindent\textbf{ReSSL and IIC.}
In CIFAR-10, CIFAR-100, STL-10, ImageNet, and Tiny ImageNet datasets, we reuse the configuration given by~\citet{ReSSL} and~\citet{IIC}. In DomainNet dataset, for a fair comparison, we use ResNet-9 as backbone and maintain the same learning configuration as mentioned above.
\vspace{5pt}
\par
\noindent\textbf{MAE and NVAE.}
For a fair comparison, we use the same backbone provided by~\citet{base-VAE}, which has a similar number of parameters as other models.
\vspace{5pt}
\par
\noindent\textbf{DeepCluster.}
We run DeepCluster for each unsupervised zero-shot dataset, which we respectively randomly crop and resize to the appropriate image size. We modify the first layer of the AlexNet architecture used by the authors to accommodate this input size. We follow the authors and use the input to the (linear) output layer as the embedding. These are 4096-dimensional, so we follow the authors and apply PCA to reduce the dimensionality to 256, followed by whitening. The configuration is built upon~\href{https://github.com/facebookresearch/ deepcluster}{github.com/facebookresearch/deepcluster}. In DomainNet dataset, we also use ResNet-9 as backbone.
\vspace{5pt}
\par
\noindent\textbf{BiGAN.}
We follow the BiGAN authors and specify a uniform 50-dimensional prior on the unit hypercube for the latent. They use a 200 dimensional version of the same prior for their ImageNet experiments, so we follow suit for our unsupervised zero-shot dataset. Our configuration is built upon \href{https://github.com/jeffdonahue/bigan}{github.com/jeffdonahue/bigan}. In DomainNet dataset, we also use ResNet-9 as backbone.
\section{Discussion and Future work.}\label{sec:future_work}
\noindent\textbf{Equal Classes Probability Assumption.}
We hold a balanced classes probability assumption in Lemma~\ref{lemma:correct_num}, which may raise scrutiny and challenge. Because in the unsupervised setting, the samples included in the constructed tasks are random. However, according the conclusion given by~\citet{UMTRA}, for a task $T_i$, the probability that all samples are in a different class is equal to 
$$
P = \frac{C_1! \cdot k^{C_2} \cdot (C_1 \cdot k - C_2)!}{(C_1 - C_2)! \cdot (C_1 \cdot k)!}.
$$
When $ C_1 \gg C_2$, we have $p \to 1$, which implies that the equal probability assumption holds.
\vspace{5pt}
\par
\noindent\textbf{Experiments Under Equal Entropy Conditions.}
Entropy is not a directly utilized metric in the algorithm training process. Instead, it represents the resources required for annotation. Therefore, we leverage the label noise ration given by Lemma~\ref{lemma:correct_num}, \ie,
\begin{align*}
    p=e^{\frac{H}{m}}/{C_1},
\end{align*}
to infer the expected label noise in the dataset. We then use this noised dataset for training different models to simulate equal entropy conditions.
\vspace{5pt}
\par
\noindent\textbf{The Underestimated Number of Tasks $n$.}
In the worst-case scenario, according to the definition in Theorem~\ref{theorem:meta_gen_limited_entropy}, the number of tasks should be
$$
n=\frac{m}{kC_2}.
$$
However, in a typical unsupervised setting, the samples in a task are independently and randomly sampled. Assuming the total number of task samples is $kC_2$ , the number of different tasks we can construct is
$$
\binom{m}{kC_2} = \frac{m!}{(kC_2)! (m - kC_2)!}.
$$
This number is significantly larger than what we use in the paper. As a result, the actual generalization error bound of meta-learning under limited entropy conditions is much tighter than the one derived in Theorem~\ref{theorem:meta_gen_limited_entropy}. This reinforces the conclusion given by Corollary~\ref{corollary:main}.
\vspace{5pt}
\par

\noindent\textbf{Overhead.}
MINO incurs low computational overhead by replacing existing components rather than adding new ones. Specifically, it replaces CACTUs' K-means with DBSCAN, uses a dynamic head instead of a static one, and introduces a lightweight meta-scaler based on batch-level linear computation. As shown in Table~\ref{tab:ablation_overhead}, each component adds less than 7\% training overhead compared to CACTUs, while achieving a 13.21\% accuracy gain (see Table~\ref{tab:comp_unsup_few_class}). Importantly, all components are used only in training and do not affect inference overhead.
\begin{table}[h]
    \centering
    \vspace{-5pt}
    \caption{Per-sample latency (ms) on CIFAR-100 during training and inference, compared with CACTUs.}
    \vspace{-10pt}
    \resizebox{\linewidth}{!}{
        \begin{tabular}{lll}
            \toprule
            \textbf{Setting} & \textbf{Training Latency (ms)} & \textbf{Inference Latency (ms)} \\
            \midrule
            \textbf{CACTUs}             & 4.98 & 1.56 \\
            \textbf{W/O DBSCAN}         & 5.32~~\textcolor{red}{$\uparrow$~0.34} & 1.54~~\textcolor{green}{$\downarrow$~0.02} \\
            \textbf{W/O Meta-Learning (WCT)} & 4.26~~\textcolor{green}{$\downarrow$~0.72} & 1.57~~\textcolor{red}{$\uparrow$~0.01} \\
            \textbf{W/O Meta-Scaler}    & 5.29~~\textcolor{red}{$\uparrow$~0.31} & 1.56~~\textcolor{red}{$\uparrow$~0.00} \\
            \textbf{MINO (Ours)}        & 5.51~~\textcolor{red}{$\uparrow$~0.53} & 1.57~~\textcolor{red}{$\uparrow$~0.01} \\
            \bottomrule
        \end{tabular}
    }
    \label{tab:ablation_overhead}
\end{table}
\vspace{5pt}
\par

\noindent\textbf{Future Work.}
We will establish a more rigorous theoretical framework. By improving the computation method for the entropy of samples, this framework will no longer rely on the equal classes probability assumption, enabling it to cover datasets with arbitrary class probability distributions. At the same time, based on the latest research~\cite{noise_claasify, noise_federal}, we need to incorporate the impact of label noise on the generalization ability of the algorithm into this theoretical framework. This allows us to go beyond considering only the worst-case scenario in terms of sample size, as in Proof of Theorem~\ref{theorem:single_gen_limited_entropy}, and instead provide a tighter generalization error upper bound.

\section{Related Work}
\noindent\textbf{Theoretical analysis of meta-learning.}
In recent years, significant advances have been made in meta-learning theory research. Studies have revealed important insights about MAML's fast adaptation mechanisms~\cite{ANIL,unreval}, while other research has demonstrated that well-designed embeddings can potentially outperform meta-learning approaches in few-shot classification tasks~\cite{a_good_embedding}. Further investigations have established the independence between meta-training and adaptation algorithms~\cite{a_closer_again}, and identified specific conditions where baseline methods can exceed meta-learning performance in few-shot classification tasks~\cite{a_closer_look}. Additional research has examined the implications of class and novel class generalization in meta-learning~\cite{meta_baseline}, while theoretical work has established upper bounds for meta-learning generalization error~\cite{generalization_PAC_1,generalization_information_2,generalization_stable_2,generalization_PAC_2,generalization_PAC_3,genralization_information_1,generalization_stable_1}. This paper addresses two fundamental questions regarding meta-learning: Under what circumstances does meta-learning demonstrate superior performance compared to alternative algorithms in few-shot classification, and what underlying factors contribute to this advantage?
\\
\par
\noindent\textbf{Unsupervised meta-learning.}
Unsupervised meta-learning~\cite{CACTUs,UMTRA,Meta-GMVAE,PsCo,HMS,UML1} links meta-learning and unsupervised learning by constructing synthetic tasks and extracting the meaningful information from unlabeled data. For example, CACTUs~\cite{CACTUs} cluster the data on the pretrained representations at the beginning of meta-learning to assign pseudo-labels. Instead of pseudo-labeling, UMTRA~\cite{UMTRA} and LASIUM~\cite{LASIUM} generate synthetic samples using data augmentations or pretrained generative networks like BigBiGAN~\cite{BiGAN}. Meta-GMVAE~\cite{Meta-GMVAE} and Meta-SVEBM~\cite{Meta-SVEMB} represent unknown labels via categorical latent variables using variational autoencoders~\cite{NVAE} and energy-based models, respectively. In this paper, we leverage the sights under entropy-limited supervised setting, improve meta-learning algorithm's robustness against label noise and heterogeneous tasks.
  
\end{document}